\documentclass{article}
\usepackage{times}
\usepackage{color}
\usepackage{graphicx}
\usepackage{latexsym}
\usepackage[]{algorithm2e}
\usepackage{amsfonts}
\usepackage{amssymb}
\usepackage{amsmath}
\usepackage{xspace}

\newcommand{\moves}{\small \textsc{Moves}\xspace}

\newcommand{\para}[1]{\smallskip\noindent\textbf{#1}}

\newcommand{\alg}{{Algorithm}~\ref{alg}\xspace}

\newcommand{\algtwo}{{Algorithm}~2\xspace}

\newcommand{\La }{\mbox{$\mathcal{L}$} }

\newcommand{\A}{\mathcal{A}}




\newcommand{\gag }{G_{\A, \Ct}}
\newcommand{\gagh }{G_{\A}^h}


\newcommand{\ai}{\varphi}

\newcommand{\Js}{J}

\newcommand{\CH}{\mathrm{CH}}




\newcommand{\Ct}{\Gamma}

 \newcommand{\Dm}{\{0,1\}^{\A}}

\newcommand{\Dmc}{\mathcal{J}_{\A, \Ct}}

\newcommand{\Pf}{P}















 \newcommand{\ie}{\mbox{\textit{i.e.}, }}
\newcommand{\eg}{\mbox{\textit{e.g.}, }}


\newcommand \argmin[1] {\underset{{#1}}{\mbox{argmin}}}





















\newtheorem{definition}{Definition}
\newenvironment{proof}{{\bf Proof}}{\hfill\qed\vspace{1ex}}
\def\qed{$\blacksquare$}
\newtheorem{theorem}{Theorem}
\newtheorem{proposition}[theorem]{Proposition}
 \newtheorem{lemma}[theorem]{Lemma}
\newtheorem{observation}[theorem]{Observation}
 \newtheorem{corollary}[theorem]{Corollary}
 \newtheorem{example}{Example}

\newcommand{\putaway}[1]{}

\newcommand{\new}[1]{{\color{black}#1}}
\renewcommand{\ie}{i.e., }
\renewcommand{\eg}{e.g., }
\newcommand{\marija}[1]{{\color{black} #1}}

\setcounter{figure}{0}
\begin{document}

\title{Iterative Judgment Aggregation}

\author{Marija Slavkovik 
   \and Wojciech Jamroga}

\maketitle
\bibliographystyle{ecai}

\begin{abstract}
Judgment aggregation problems form a class of  collective decision-making problems represented in an abstract way,  subsuming some well known problems such as voting. A collective decision can be reached in many ways, but a direct one-step aggregation of individual decisions is arguably most studied. Another way to reach collective decisions is by iterative consensus building -- allowing each decision-maker to change their individual decision in response to the choices of the other agents until a consensus is reached. Iterative consensus building  has so far only been studied for voting problems. Here we propose an iterative judgment aggregation algorithm, based on movements in an undirected graph, and we study for which instances it terminates with  a consensus. We also compare the computational complexity of our itterative procedure with that of related judgment aggregation operators.
\end{abstract}

\section{Introduction}

Social choice aggregation methods, such as voting \cite{Nurmi10}, are of interest to artificial intelligence as methods for collective decision-making among humans and automated agents alike \cite{handbook}. Judgment aggregation problems \cite{List2010} are problems of aggregating individual judgments on a fixed set of logically related issues, called an agenda. Intuitively, an issue is a question that can be answered ``yes'' or ``no.'' Alternatively, an issue is a proposition that can be accepted or rejected. A judgment is a consistent collection of  ``yes'' and ``no''  answers, one for each issue. Judgment aggregation has been used to model collective decision-making in multi-agent systems \cite{COIN10,thesis,Synthese12new}. It is also interesting because it generalises voting problems, i.e., problems of choosing one option from a set of available options by  aggregating agents' individual preferences over these options.
A voting problem can be represented as a judgment aggregation problem under some mild conditions, see e.g.~\cite{DietrichList07,ADT2013}.

Aggregation methods produce a joint decision for a group of agents by aggregating the set of their individual decisions, called a profile, using an aggregation operator. Another approach to collective decision-making is deliberation, when agents negotiate which decisions to make. In multi-agent systems, deliberation procedures are constructed using an abstract argumentation framework to model relations between decisions, cf.~e.g.~\cite{Ontanon2008,Kok2011}. A third,
comparatively less explored  method to reach collective decisions is by {\em iterative consensus building}: each agent starts with an individual decision which she then iteratively changes in response to the individual decisions of the other agents until all agents end up supporting the same decision, \ie until a {\em consensus} is reached. While in standard aggregation all individual decisions are elicited once, forming a profile, and after the elicitation the agents can no longer change the profile, an iterative procedure allows agents to change their decisions many times, even back and forth.

The existence of judgment transformation functions, i.e., functions that transform one profile of individual judgments into another profile of individual judgments (possibly towards consensus) has been  considered by List~\cite{List2011}. It was shown that under a set of reasonable and minimal desirable conditions no transformation function can exist. Social choice aggregation theory is rife with impossibility results such as this, yet few specific aggregation operators (that violate some of the desirable conditions) are proposed. There are more voting  operators than judgment aggregation operators, which is unsurprising since voting is a much older discipline, but the number of judgment aggregation operators is also on the rise. Those include: quota-based rules \cite{DietrichL2007,premise10}, distance-based rules \cite{ConalPiggins12,EndrissG14,EndrissGP12,MillerOsherson08,Pigozzi2006,JELIA14}, generalisations of Condorcet-consistent voting rules \cite{TARK11,NehringPivato2011,PuppeNehring2011},  and  rules  based on the maximisation of some scoring function \cite{Dietrich:2013,TARK11,Zwicker11}. Deliberation and iterative consensus reaching procedures for voting problems 
have been explored, \eg in~\cite{MeirPRJ10,Lev:2012,GrandiADT,Hassanzadeh2013,Obraztsova15}.
However, to the best of our knowledge, there are no iterative procedures for aggregating judgments. With this work we aim to fill in the gap.

We consider all possible judgments for an agenda as vertices in a graph. The existence of an edge between judgments in the graph depends on the relations between the truth-value assignments on the same issue in the connected judgments. We define three intuitive agenda graphs.  We design an iterative consensus building algorithm which reaches consensus in the following way: In the first step of the algorithm, each agent chooses a vertex and lets the other agents know what she has chosen. In  each subsequent step  each agent independently from the other agents moves to an adjacent vertex if this move reduces her path distance to the other vertices in the profile. The agents are only allowed to   move along a shortest path towards some other agent. The moving continues until a consensus is reached (\ie when all agents end up on the same vertex). We then exploit properties of graphs to study for which  subgraphs corresponding to a profile of judgments the algorithm terminates with a consensus.

Judgment aggregation operators suffer from two shortcomings. First, they are often irresolute, \ie more than one collective decision is produced
Unlike in voting, tie-breaking in judgment aggregation is not straightforward and virtually unexplored. 
Secondly, deciding if a given judgment is among the possible ``winners'' of the aggregation is often intractable \cite{ADT2013,EndrissDeHaanAAMAS2015}.
The set of tractable aggregation functions is very limited, exceptions being \cite{DietrichL2007,premise10,EndrissG14}.
An iterative procedure clearly avoids ties when it reaches a consensus, and this is one advantage of our proposal.
We also show that our consensus-oriented procedure may offer some computational benefits when compared to standard judgment aggregation rules.

The  motivation  for our iterative procedure is both descriptive and prescriptive. On one hand, our algorithm is meant to approximate consensus formation that happens in human societies. On the other hand, our procedure can be useful for implementing artificial agents, as producing a consensual judgment is in some cases distinctly cheaper than computing the collective opinion in one step elicitation by a standard judgment aggregation procedure.

The paper is structured as follows. In Section~\ref{sec:pre} we introduce the judgment aggregation framework. In Section~\ref{sec:ag} we define three relevant agenda graphs and recall some useful concepts from graph theory. In Section~\ref{sec:algorithm} we present the algorithm for iterative judgment aggregation, and discuss necessary conditions for its termination with a consensus. In Section~\ref{sec:termination} we investigate sufficient termination conditions for each of the three agenda graphs. In Section~\ref{sec:properties} we briefly discuss the quality of the reached consensus with respect to some  judgment aggregation operators, and study the computational complexity of the algorithm. In Section~\ref{sec:rwd} we discuss the related work. In Section~\ref{sec:sum} we present our conclusions and discuss future work.

\section{Preliminaries}\label{sec:pre}

We first introduce the basic definitions of judgment aggregation.

\para{Judgments.}
Let $\La$ be a set of  propositional variables.
An {\em agenda} $\A = \{ \ai_1,  \ldots, \ai_m\}$ is a finite set $\A \subseteq \La$. The elements of $\A$ are called {\em issues}.   A {\em judgment} is a (truth assignment) function $\Js: \A \rightarrow \{0,1\} $ mapping each issue to either 0 (reject) or 1 (accept).  We write $\Dm$ as a shorthand  for  $\A \rightarrow \{0,1\}$, the space of all possible judgments for $m$ issues, \ie all sequences of length $m$ comprised of 0s and 1s.  We use $\Js(\ai)$ to denote the value assigned to $\ai \in \A$.  The Hamming distance between two judgments is defined as the number of issues on which the judgments differ, \ie $d_h(\Js^a, \Js^b) = \#\{ \ai \in \A \mid \Js^a(\ai) \neq \Js^b(\ai) \}$.

With each agenda we associate a {\em constraint} $\Ct \in \La_{\A}$, where $\La_{\A}$ is the set of well formed formulas constructed with variables from $\A$ and the logical connectives $\neg$ (negation), $\wedge$ (conjunction), $\vee$ (disjunction), and $\rightarrow$ (implication). The formula $\Ct$ is assumed not to be a contradiction. A judgment from  $\Dm$ is {\em rational} for $\Ct$ if and only if it is a model for $\Ct$ in the sense of classical propositional logic.
   For a given $\Ct \in \La_{\A}$, we define $\Dmc \subseteq \{ 0,1\}^{\A}$
   to be the set of all {\em rational judgments}  for $\A$ and $\Ct$.

%
%


\para{Agents and profiles.} Let $N=\{1, 2, \ldots, n\}$ be a finite set of {\em agents}.  A profile $\Pf = (\Js_1, \ldots,\Js_i, \ldots, \Js_n) \in \Dmc^n$ is a list of rational judgments, one for each agent. We denote $P[i] = \Js_i$
and $\Pf_{-i} = (\Js_1, \ldots,\Js_{i-1}, \Js_{i+1}, \ldots, \Js_n)$. Further let $\{\Pf\}$ be the set of all distinct judgments that are in $\Pf$.
 We often abuse notation and write $\Js_i \in \Pf$ when  $\Pf[i] = \Js_i$. We reserve subscripted judgments, \eg  $\Js_i$,  to denote judgments that belong to some profile and the superscripted judgments, \eg $\Js^a$, $\Js^b$, to denote rational judgments that may not belong to some profile.
%
A profile is {\em unanimous} if $\{ \Pf\} = \{ \Js\}$,  for some $\Js \in \Dmc$. A judgment $\Js$ is a {\em plurality judgment} in $\Pf$ if and only if $\#\{i \mid \Pf[i] = \Js\} \geq \#\{i \mid \Pf[i] \neq \Js\}$.

\begin{figure}[h!]\centering
\new{\begin{tabular}{ | l || ccc|} \hline
    & $\varphi_1$ &  $\varphi_2$ & $\varphi_3$   \\ \hline
  agent 1  & 0 & 1 & 0 \\
  agent 2  & 1 & 0 & 0 \\
  agent 3  & 1 & 1 & 1 \\ \hline \hline
  majority & 1 & 1 & 0 \\ \hline
\end{tabular}
\caption{Doctrinal paradox}
\label{fig:paradox}}
\end{figure}

\begin{example}\label{ex:doctrinal}
The quintessential example in judgment aggregation is the ``doctrinal paradox" which is described with $\A = \{ \ai_1, \ai_2, \ai_3\}$ and $\Ct= (\ai_1 \wedge \ai_2) \leftrightarrow \ai_3$. The $\Dmc = \{ (0,0,0), (0,1,0), (1,0,0), (1,1,1)\}$.
The doctrinal paradox profile is $\Pf=( (0,1,0), (1,0,0), (1,1,1))$\new{, see also Figure~\ref{fig:paradox}. Note that all the three \marija{profile} judgments are rational, but the collective judgment obtained by taking} the value for each issue assigned by a strict majority of agents, the so called majority rule, is not rational.
\end{example}

\section{Agenda Graphs}\label{sec:ag}

An {\em agenda graph} is a graph $\gag = \langle V, E\rangle$ whose nodes are judgments for some agenda $\A$, namely $V \subseteq \{ 0,1\}^{\A}$.
Given an agenda $\A$ and constraints $\Ct$ we define three agenda graphs. The {\em Hamming graph} $\gagh$ is the graph over all possible (not necessarily rational!) judgments, that connects vertices which differ on exactly one issue. The  {\em Model graph} $\gag^m$ is the graph over all {\em rational} judgments, where two judgments are adjacent iff no ``compromise" exists between them. The {\em Complete graph} $\gag^c$ is the fully connected graph over all  rational judgments. Formally:
%
\begin{description}
\item The {\bf Hamming graph} is  $\gagh = \langle \Dm, E^h\rangle$ where  $(\Js^a,\Js^b)\in E^h$ iff $d_h(\Js^a, \Js^b)=1$.
\item The {\bf Model graph}  is   $\gag^m =\langle \Dmc, E^m\rangle$  where   $(\Js^a,\Js^b)\in E^m$ iff  there exists no judgment $\Js^c\in \Dmc$ {\em between} $\Js^a$ and $\Js^b$. A judgment $\Js^c \in \Dmc$ is between judgments $\Js^a \in \Dmc$ and  $\Js^b \in \Dmc$ when $\Js^c \neq \Js^a$, $\Js^c \neq \Js^b$, $\Js^a \neq \Js^b$  and  for every $\ai \in \A$    if $\Js^a(\ai) = \Js^b(\ai)$, then $\Js^c(\ai) = \Js^a(\ai)=\Js^b(\ai).$
\item The {\bf Complete graph} is  $\gag^c = \langle \Dmc, E^c\rangle$ where  $E^c= \Dmc \times \Dmc$.
\end{description}
%
%
The agenda graphs for the doctrinal paradox of Example~\ref{ex:doctrinal} are shown in Figure~\ref{fig:doctrinalexample}.

\begin{figure}[h!]\centering
 \includegraphics[width=0.5\textwidth]{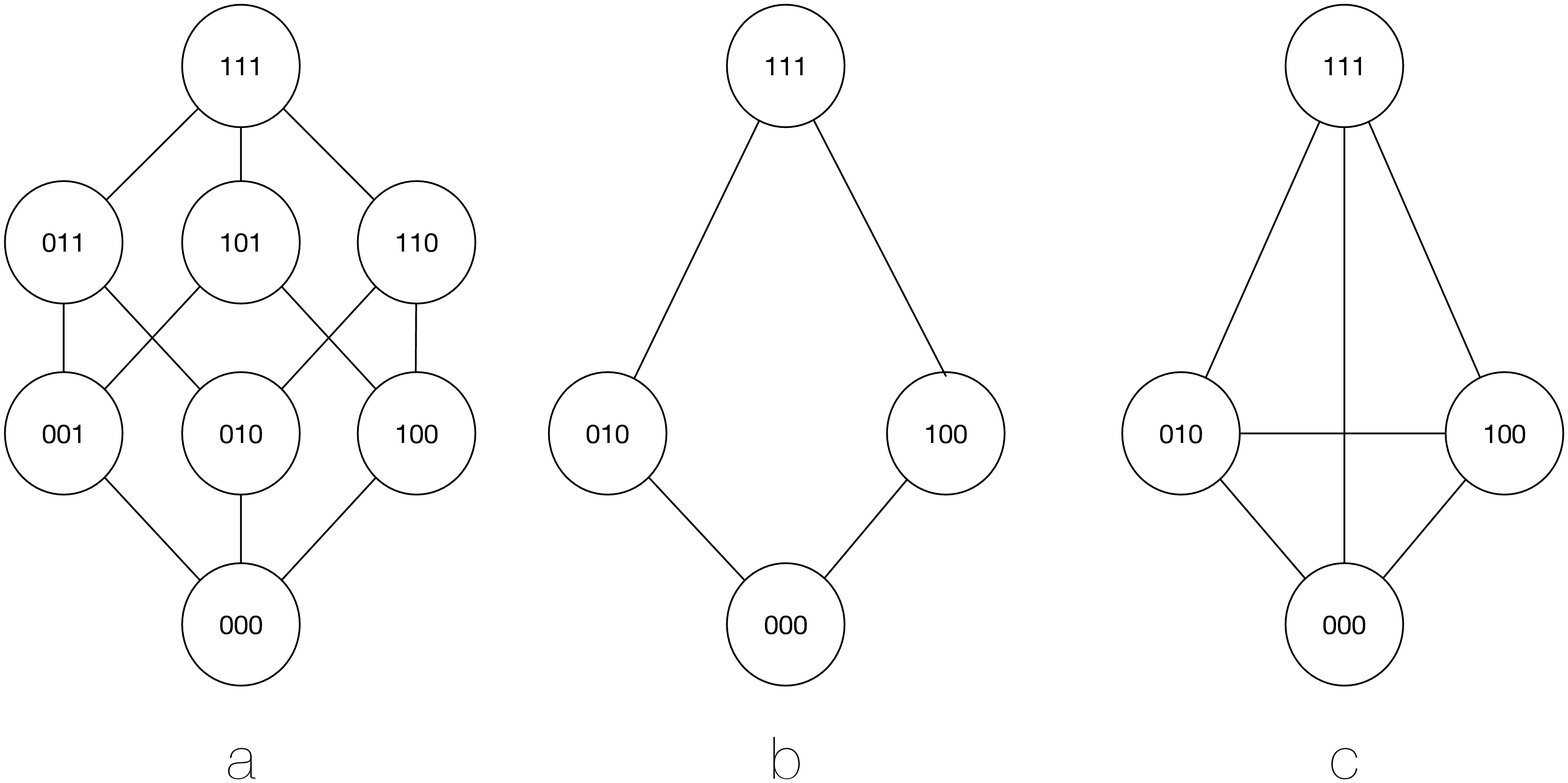}
 \caption{Agenda graphs for the doctrinal paradox:\quad
   (a) the Hamming graph $\gagh$,\quad (b) the Model graph $\gag^m$,\quad (c) the Complete graph $\gag^c$ for  $\A$ and $\Ct$ in Example~\ref{ex:doctrinal}. }\label{fig:doctrinalexample}
\end{figure}

We use $d_x$ to denote the (shortest) path distance on an agenda graph $\gag^x$. The path distance on $\gag^c$ is also known as the {\em drastic distance}: the distance between two judgments is 0 iff they are the same on all issues and 1 iff they differ on at least one issue. The path distance on $\gagh$ is the Hamming distance, and the path distance on $\gag^m$ is the Model distance introduced and formally characterized in~\cite{ConalPiggins12}. Recall that a  path distance on a graph $\gag$, as on any graph, is a distance function in the true sense since for  every $\Js^a, \Js^b, \Js^c\in V$ it satisfies:  $d(\Js^a, \Js^b)=0$ iff $\Js^a =\Js^b$,
$d(\Js^a, \Js^b)= d(\Js^b, \Js^a)$,  and $d(\Js^a, \Js^c) \leq d(\Js^a, \Js^b)  + d(\Js^b, \Js^c) $ (triangle inequality).

A graph $G' = \langle V',E' \rangle$ is a subgraph of graph $G = \langle V, E\rangle$, denoted $G' \subseteq G$,  if $V' \subseteq V$ and $E'\subseteq E$.
The $V'$-induced subgraph of a graph $G$ is the graph $G'\subseteq G$ with vertices $V'$ and edges $E'$ which satisfies that, for every pair of vertices in $V'$, they are adjacent in $G$ if and only if they are adjacent in $G'$.

We consider profile-induced subgraphs of $\gag$  and make use of their ``geometry". Therefore we define some useful concepts  for a given agenda graph $\gag=\langle V,E\rangle$ and agents $N=\{1, 2, \ldots, n\}$, following the terminology from graph theory \cite{Pelayo}.
\begin{description}
\item The {\em interval} between  a pair of vertices $\Js^a, \Js^b \in V$, denoted $I_{\A,\Ct}[\Js^a, \Js^b]$, is the set of all the judgments on all the shortest paths in $\gag$ from $\Js^a$ to $\Js^b$.
\item A subset $S \subseteq V$ is \emph{convex} if it is closed under $I_{\A,\Ct}$, namely when it includes all shortest paths between two vertices in $S$.
\item The {\em convex hull of $\Pf$} on $\gag$,  denoted $\CH(\Pf)$, is the smallest convex subset of $V$ from $\gag$ that contains $\{\Pf\}$.
\item The {\em eccentricity} of a judgment $\Js^a \in S \subseteq V$ is $e_S(\Js^a) = \max\{ d(\Js^a, \Js^b) \mid \Js^b \in S\}$, \ie the farthest that $\Js^a$ gets from any other judgment in $S$.
\item A {\em diameter} of a set $S \subseteq V$  is $mx_d(S) = \max\{ e_{S}(\Js) \mid \Js \in S\}$, namely the maximal eccentricity of a vertex in $S$. All judgments for which $mx_d(S) = e_S(\Js) $ are called {\em peripheral judgments} for $S$. For  $S = \{\Pf\}$ we call these judgments {\em peripheral judgments of a profile $\Pf$}. If for two judgments $\Js^a, \Js^b$ it holds that $d(\Js^a,\Js^b) = mx_d(\Pf)$, then these are called {\em antipodal judgments of a profile $\Pf$}.
\end{description}

\begin{example}\label{ex:hull}
Consider the Hamming graph $\gagh$ for the doctrinal paradox, presented in Figure~\ref{fig:doctrinalexample}a, and take the profile $\Pf$ as defined in Example~\ref{ex:doctrinal}. The $\CH(\Pf)$-induced subgraph of $\gagh$ is given in Figure~\ref{fig:hull}; this graph contains all the shortest paths, and nodes from $\Dmc$,  between $(0,1,0)$ and $(1,0,0)$,  between $(0,1,0)$ and $(1,1,1)$, and between $(1,0,0)$ and $(1,1,1)$. Node $(0,0,1)$ is not in this $\CH(\Pf)$-induced subgraph of $\gagh$ because this node is not on any of  the shortest paths between the profile judgments.
\begin{figure}[h!]\centering
 \includegraphics[width=0.2\textwidth]{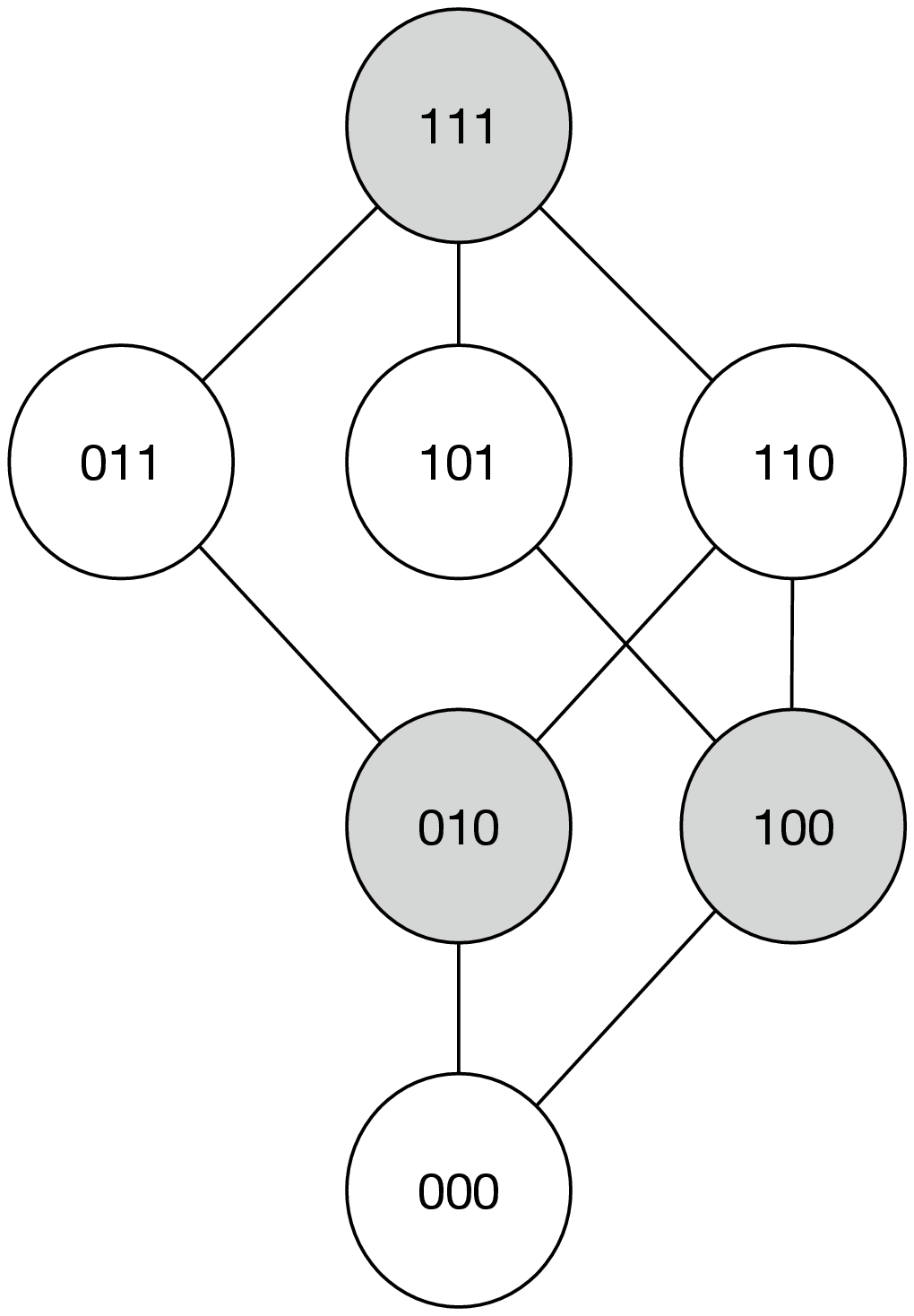}
 \caption{The $\CH(\Pf)$-induced subgraph of $\gagh$ for $\Pf$ in Example~\ref{ex:doctrinal}.}\label{fig:hull}
\end{figure}
\end{example}

We have defined the convex hull of $\Pf$ because we will build our iteration algorithm on the principle of only allowing the agents to move from their current judgment to an adjacent judgment in the hull of $\Pf$. By doing this we ensure that the agents do not disperse away from each other. The Proposition~\ref{prop:mxd} shows that if agents in profile $\Pf$ each move to a judgment in $\CH(\Pf)$, thus creating profile $\Pf'$, the diameter of the $\Pf'$ profile cannot be bigger than that of the $\Pf$.

\begin{proposition}\label{prop:mxd}
For an agenda graph $\gag$ and profile $\Pf \in \Dmc^n$, if $S \subseteq \CH(\Pf)$, then $\CH(S) \subseteq \CH(\Pf)$ and  $mx_d(\CH(S)) \leq mx_d(\CH(\Pf))$.
\end{proposition}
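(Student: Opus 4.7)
The plan is to observe that both claims are essentially monotonicity properties, and that the second one follows almost mechanically from the first. So the bulk of the work is already done once the first inclusion is established.

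First I would handle $\CH(S) \subseteq \CH(\Pf)$ by a direct appeal to the defining property of the convex hull. By hypothesis $S \subseteq \CH(\Pf)$, and $\CH(\Pf)$ is itself a convex subset of $V$ (by its definition as the smallest convex superset of $\{\Pf\}$, it is in particular convex). Thus $\CH(\Pf)$ is a convex subset of $V$ containing $S$. Since $\CH(S)$ is, by definition, the smallest such set, we immediately get $\CH(S) \subseteq \CH(\Pf)$. This is the ``hard'' step only in the sense that one must be careful to check that the notion of convex hull applies to arbitrary subsets of $V$, not only to (the image of) a profile; but the definition in Section~\ref{sec:ag} depends only on the set of judgments involved, so the extension is routine.

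Next I would deduce the diameter bound from the inclusion. Unfolding the definitions,
\[
mx_d(\CH(S)) \;=\; \max\bigl\{ d(\Js^a,\Js^b) \;\big|\; \Js^a,\Js^b \in \CH(S) \bigr\},
\]
and similarly for $\CH(\Pf)$. Since every pair $(\Js^a,\Js^b) \in \CH(S) \times \CH(S)$ is also a pair in $\CH(\Pf) \times \CH(\Pf)$ by the inclusion just proved, the set of distances on the left is a subset of the set of distances on the right, hence its maximum is at most the maximum on the right. This yields $mx_d(\CH(S)) \leq mx_d(\CH(\Pf))$.

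I do not expect any real obstacle here: the only conceptual point worth flagging in the write-up is that the path distance $d$ used inside $e_S$ and $mx_d$ is the path distance in $\gag$, which does not depend on the subset we restrict attention to, so monotonicity of diameters under set inclusion holds without any additional assumption on $S$ or on $\gag$ (in particular, the argument is uniform across $\gagh$, $\gag^m$, and $\gag^c$).
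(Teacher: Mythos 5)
Your proof is correct and takes essentially the same route as the paper: the paper simply cites that $\CH$ is a (finitary) closure operator, so $S \subseteq \CH(\Pf)$ gives $\CH(S) \subseteq \CH(\CH(\Pf)) = \CH(\Pf)$, which is exactly your minimality-of-the-hull argument phrased as monotonicity plus idempotence. Your explicit derivation of $mx_d(\CH(S)) \leq mx_d(\CH(\Pf))$ from the inclusion, noting that $d$ is the path distance in $\gag$ itself and not in an induced subgraph, is left implicit in the paper but is the right justification.
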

\begin{proof}
This property follows from the fact that $\CH$ is a finitary closure operator \cite[p.~6, Theorem~1.3]{Pelayo}. Thus, for the convex hull $\CH(\Pf)$, it holds that if $S \subseteq \CH(\Pf)$, then $\CH(S) \subseteq \CH(\CH(\Pf))$, and $\CH(\CH(\Pf)) = \CH(\Pf)$. \end{proof}

\begin{definition}
We say that the profile $\Pf$ has a $k$-cycle in $\gag$ if and only the $\CH(\Pf)$-induced subgraph of $\gag$ has a simple cycle of length $k$.
We say that $\Pf$ is a $k$-cycle in $\gag$ if and only if the $\CH(\Pf)$-induced subgraph of $\gag$ is a simple cycle in $\gag$ of length $k$.
\end{definition}

The doctrinal paradox profile from Example~\ref{ex:doctrinal} is a $4$-cycle in $\gag^m$, and it has a $6$-cycle in $\gagh$, as can be inferred from Figure~\ref{fig:doctrinalexample}.

We make the following observation.

\begin{observation}\label{col:cycle}
No profile $\Pf\in\Dmc^n$ has a 3-cycle in $\gagh$ or in $\gag^m$, regardless of $\A$ and $\Ct$.
\end{observation}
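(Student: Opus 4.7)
The plan is to reduce to the stronger claim that neither $\gagh$ nor $\gag^m$ itself contains a triangle. Since a $\CH(\Pf)$-induced subgraph preserves every edge between vertices it contains, a simple $3$-cycle in the induced subgraph would already be a triangle in the full agenda graph; so it suffices to rule out three pairwise-adjacent vertices in each ambient graph.

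For $\gagh$ the argument is immediate from bipartiteness. I would colour each judgment $\Js \in \Dm$ by the parity of $\sum_{\ai \in \A} \Js(\ai) \bmod 2$. Every edge of $\gagh$ flips exactly one issue and thus swaps parity, so $\gagh$ is bipartite, and a bipartite graph admits no odd cycle, in particular no $3$-cycle.

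For $\gag^m$ I would argue by contradiction. Suppose $\Js^a, \Js^b, \Js^c \in \Dmc$ are pairwise adjacent. For each unordered pair adjacency forces the existence of a ``witness issue'' on which the pair agrees but the third judgment disagrees --- otherwise the third would be a rational judgment between the pair, violating adjacency. Calling these witnesses $\ai_{ab}, \ai_{ac}, \ai_{bc}$, a short check shows they are pairwise distinct and that the restriction of the three judgments to those three issues reproduces the agreement pattern of the doctrinal paradox of Example~\ref{ex:doctrinal}. The plan is then to exhibit an explicit rational between-judgment for some pair, contradicting adjacency; a natural first candidate is the issue-wise majority $\Js^{\ast}$ of $\Js^a, \Js^b, \Js^c$, which agrees with any two of them wherever they agree and differs from each on the corresponding witness issue, hence would be between every pair provided $\Js^{\ast} \in \Dmc$.

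The main obstacle is precisely the rationality of $\Js^{\ast}$: issue-wise majority need not preserve $\Ct$, as the doctrinal paradox itself shows, so the bare majority construction will not close the argument. The harder half of the proof will therefore consist in replacing $\Js^{\ast}$ by a different between-judgment --- for instance one obtained by flipping only the value on a single witness issue in one of $\Js^a, \Js^b, \Js^c$, or by invoking the closure properties of $\CH$ from Proposition~\ref{prop:mxd} to force some rational compromise into the convex hull of a pair --- and showing that at least one such rational judgment is forced to lie between some pair from $\{\Js^a, \Js^b, \Js^c\}$.
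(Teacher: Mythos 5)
Your reduction to triangle-freeness of the ambient graphs is sound, since an induced subgraph retains every edge between the vertices it keeps, and your $\gagh$ half is complete and correct: the parity colouring makes $\gagh$ bipartite, hence free of odd cycles and in particular of 3-cycles. This is essentially the content of the paper's own (very terse) justification, which appeals to the betweenness property of the path distance; for $d_h$ your version is the more explicit argument.

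The $\gag^m$ half, however, is not merely unfinished --- it cannot be finished, because the stronger claim you reduce to is false, and so is the Observation itself for $\gag^m$. Take $\A=\{\ai_1,\ai_2,\ai_3\}$ and let $\Ct$ express that exactly one issue is accepted, so $\Dmc=\{(1,0,0),(0,1,0),(0,0,1)\}$. Any two of these judgments agree on exactly one issue, and the unique remaining rational judgment disagrees with them on precisely that issue, so no pair has a rational judgment between it: all three pairs are edges, i.e.\ $\gag^m$ is itself a triangle, realizing exactly the ``witness issue'' pattern you isolate. Hence there is no rational between-judgment to exhibit --- the issue-wise majority is $(0,0,0)\notin\Dmc$, and no single-flip or hull-closure substitute exists either, since $\Dmc$ contains nothing else --- so the ``harder half'' of your plan has no proof. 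Moreover, for the profile $\Pf^0$ consisting of these three judgments every interval is just its pair of endpoints, so $\CH(\Pf^0)=\{\Pf^0\}$ and the $\CH(\Pf^0)$-induced subgraph is a simple 3-cycle: the Observation as stated is contradicted for $\gag^m$ (it does hold for $\gagh$). The paper's one-line appeal to metric betweenness does not rescue the $\gag^m$ case, because in a triangle no vertex lies metrically between the other two; the honest conclusion is that the 3-cycle exclusion should be proved for $\gagh$ along your lines and restricted or re-examined for $\gag^m$, not completed as you propose.
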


This exclusion of 3-cycles is due to the {\em betweenness} property of path distances: if $\Js^b$ is between $\Js^a$ and $\Js^c$ in the graph, then $d(\Js^a, \Js^c) = d(\Js^a, \Js^b) + d(\Js^b, \Js^c)$.

\begin{definition}
An interval $I_{\A,\Ct}[\Js^a,\Js^b]$ is {\em $\epsilon$-connected} in $\gag$ if and only if the maximal path distance between two adjacent {\bf rational} judgments in the $I_{\A,\Ct}[\Js^a,\Js^b]$-induced subgraph of $\gag$ is at most $\epsilon \in \mathbb{N}$.
A profile {\em $\Pf$ is $\epsilon$-connected} in $\gag$ if and only if every interval $I_{\A,\Ct}[\Pf[i], \Pf[j]]$ for $i,j \in N$ is $\epsilon$-connected  in $\gag$.
\end{definition}

While every interval in $\gag^c$ and $\gag^m$ is $\epsilon$-connected for every $\epsilon \geq 1$, this may not be the case for intervals in $\gagh$.
As an example, consider the doctrinal paradox profile $\Pf$ from Example~\ref{ex:doctrinal} and graph $\gagh$ in Figure~\ref{fig:doctrinalexample}a. Here, the interval $I_{\A,\Ct}[(1,1,1),(0,1,0)]$ is $2$-connected because every shortest path between $(1,1,1)$ and $(0,1,0)$ passes only through judgments that are not rational.

\section{Iteration Algorithm}\label{sec:algorithm}

\new{Collective opinions in human societies are often formed not in one step, but rather in an intricate process that involves mutual information exchange, argumentation, persuasion, and opinion revision. Typically, social agents are motivated by two somewhat conflicting needs: on one hand, they want to form a unified stance with a significant enough part of the community; on the other hand, they do not want to concede too much of their own opinion. Here, we try to mimic this kind of behaviour -- obviously, in a very simplified way.}
To this end, we design an iteration algorithm, \alg,  based on an agenda graph $\gag$. As it is standard in judgment aggregation, we assume that the agents can only chose rational judgments at each iteration step.
For an agenda $\A$ and constraints $\Ct$, each agent's judgment is a node in the  graph $\gag$. In the first step of the iterative procedure each agent announces which node she has chosen. Two or more agents may choose the same node.  The agents do not have a constructed $\gag$ available. At each subsequent step, the agents (simultaneously)  compute their adjacent nodes and try  to ``move" to one of these adjacent nodes along some shortest path towards the other agents. A move is possible if and only if the adjacent judgment is rational and it brings the agent closer to the rest of the profile, \ie it decreases its aggregated path distance to the other judgments. More precisely, an agent $i \in N$ will move from $\Pf[i]$ to a  $\Js$  iff there exist a rational $\Js\in \CH(\Pf)$ s.t. $d(\Js, \Pf[i]) = 1$  and $\underset{j \in N, j\neq i}{\sum} d(\Pf[i], \Pf[j]) < \underset{j \in N, j\neq i}{\sum} d(\Js, \Pf[j])$, where $d$ is a path distance for a given $\gag$.  Given a choice between two moves the agent chooses the one which better reduces the distance to the rest of the profile. If more than one move reduces the distance to the same extent, the agent chooses  using some internal decision-making procedure which we do not model. We take it that in this case the agent chooses non-deterministically, with all move options being probable. The agents continue moving along the nodes of $\gag$  until no agent in the profile can move, or all of the agents ``sit" on the same node, namely until a {\em consensus} is reached.

\begin{algorithm}[t]\label{alg}
 \KwData{$\epsilon>0$,  $\Dmc$, $N$, own identifier $i \in N$, initial judgment $\Js_{i}^0$ }
 \KwResult{ $\Pf \in \Dmc^{|N|}$}
  $t:=0$;\ $\mbox{\moves}:= \emptyset$;\ $\Js_{i} := \Js_{i}^0$;\ $\Pf := \mbox{\em empty list}$\;

 \Repeat
{$\Pf$ is unanimous or  $\Pf' = \Pf$ }
{
$\Pf':=\Pf$\;
$\Pf[i] := \Js_{i}$\;
\For{$j\in N, j\neq i$}{$send(\Js_{i}, j)$, $receive(\Js_{j},i),  \Pf[j]:=\Js_{j}$\;}
 $\textrm{\moves}:=  
    \argmin{\Js \in \CH(\Pf) \cap\Dmc}  D(i,\Js, \Pf)\quad \cap\quad $\linebreak
    $\mbox{}\quad\{ \Js \mid 0<d(\Js,\Js_{i})\leq~1\text{ and }D(i,\Js, \Pf) < D(i,\Pf[i], \Pf) \}$\;
 \If{ $\textrm{\moves}\neq \emptyset$ }
 { 
 select $\Js \in \textrm{\moves}$,\quad $\Js_{i}:= \Js$\;}
$t:=t+1$\;
}
return $\Pf$\;

\smallskip
\caption{\marija{Iteration} algorithm}\label{alg:rra}
\end{algorithm}

In \alg, $send(\Js_{i}, j)$ informs agent $j \in N$ that agent $i \in N$ has chosen to move to node $\Js_i \in V$, while $receive(\Js_{j},i)$ denotes that the agent $i \in N$ has been informed that agent  $j \in N$  has chosen to move to node $\Js_j \in V$. To ease readability we use \linebreak  $D(i,\Js, \Pf) = \underset{j \in N, j\neq i}{\sum} d(\Js, \Pf[j])$. We call $D(i,\Js, \Pf)$ the distance of $\Js$ to the profile $\Pf_{-i}$.
In \alg, at each iteration $t$, \moves is the set of judgments that are strictly closer to $\Pf$ than the current judgment $\Js_i$. \new{Note that the algorithm is fully decentralised, in the sense that there is no need for any central authority to take over the iteration at any point of the process. }

The {starting profile} $\Pf^0$ collects the initial individual opinions of the agents. That is, it is the profile that would be aggregated under classical one-step social choice. We say that the algorithm \emph{reaches consensus} $\Js$ for $\Pf^0$, if it terminates starting from $\Pf^0$ by returning the unanimous profile $\{\Pf \}=  \{\Js\}$.
We first observe a necessary condition for reaching consensus.
\begin{proposition}\label{prop:necessary}
If \alg~reaches consensus, then $\Pf^0$ is $\epsilon$-connected for $\epsilon =1$.
\end{proposition}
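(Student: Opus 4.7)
My plan is to argue by contraposition: assume $\Pf^0$ is not $1$-connected, and show the algorithm cannot terminate in consensus. By the definition of $\epsilon$-connectedness, non-$1$-connectedness gives a pair of agents $i,j\in N$ and two rational judgments $\Js^c,\Js^d$ inside the interval $I[\Js_i^0,\Js_j^0]$ that are ``nearest-neighbour'' rationals (no other rational lies strictly between them on any shortest path) with $d(\Js^c,\Js^d)\geq 2$. The simplest sub-case is $(\Js^c,\Js^d)=(\Js_i^0,\Js_j^0)$; I would handle this first and then reduce the general case by restricting attention to the sub-interval between $\Js^c$ and $\Js^d$.

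The first step is to extract the operational content of the move rule. An agent $i$ moves from $\Pf[i]=\Js_i^t$ to an adjacent $\Js\in \CH(\Pf^t)\cap\Dmc$ only when $D(i,\Js,\Pf^t)<D(i,\Js_i^t,\Pf^t)$. Since a single edge in $\gag$ changes the distance to each other profile member by at most $1$, the strict decrease of the sum forces the new judgment to lie on a shortest path from $\Js_i^t$ toward at least one other current judgment $\Pf^t[k]$. Hence agent $i$'s whole trajectory $\Js_i^0\to\Js_i^1\to\cdots\to\Js^*$ is a sequence of rational nodes connected by $\gag$-edges, and by Proposition~\ref{prop:mxd} it stays inside $\CH(\Pf^0)$ throughout.

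The second step is to concatenate the trajectories of $i$ and $j$: follow $i$'s trajectory forward to $\Js^*$ and $j$'s trajectory in reverse back to $\Js_j^0$. This produces a walk $W_{ij}$ from $\Js_i^0$ to $\Js_j^0$ in which every vertex is rational and every pair of consecutive vertices is at $\gag$-distance exactly $1$. I would then argue that $W_{ij}$ can be pruned to a rational shortest path from $\Js_i^0$ to $\Js_j^0$ lying inside $I[\Js_i^0,\Js_j^0]$; this yields a rational unit-step path connecting $\Js^c$ and $\Js^d$ and so contradicts the hypothesised gap.

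The main obstacle is the pruning step. In the two-agent case it is essentially automatic: each move strictly decreases the distance to the only other agent, so flipped bits are monotone (an already-agreed bit cannot be flipped without violating the strict-decrease condition), and the concatenated walk has length exactly $d(\Js_i^0,\Js_j^0)$. In the multi-agent case, however, a single move may decrease distance to some agents while increasing it to others, so bits can zig-zag along a trajectory. Eliminating any such zig-zag — using the strict-decrease condition, the containment in $\CH(\Pf^0)$ given by Proposition~\ref{prop:mxd}, and the betweenness property of the path distance on $\gag$ — is where the technical work lies.
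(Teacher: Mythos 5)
Your steps 1--3 are, modulo the contrapositive packaging, exactly the paper's own proof: every accepted move is a unit step to a rational judgment, so each agent's trajectory is a unit-step walk through $\Dmc$, and since all agents end at the same consensus $\Js^{\ast}$, concatenating the trajectories of agents $i$ and $j$ yields a unit-step walk of rational judgments joining $\Pf^0[i]$ to $\Pf^0[j]$. The paper stops precisely there and concludes that $\Pf^0$ is $1$-connected; it performs no pruning and never argues that the connecting chain is a geodesic or that it lies inside $I_{\A,\Ct}[\Pf^0[i],\Pf^0[j]]$.

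The genuine gap is your step 4, which you flag but do not carry out, and which does not follow from the ingredients you cite. For $n>2$ a legal move only has to decrease the sum $D(i,\Js,\Pf)$, so an agent may move strictly away from $j$; all you know is that the concatenated walk $W_{ij}$ stays inside $\CH(\Pf^0)$, which is in general much larger than $I_{\A,\Ct}[\Js_i^0,\Js_j^0]$. A unit-step rational walk in the hull cannot in general be shortcut to a rational geodesic in the interval: the interval may contain no intermediate rational judgment at all while a rational detour exists in the hull (e.g.\ on $\gagh$ with four issues, $\Js_i^0=(0,0,0,0)$ and $\Js_j^0=(1,1,0,0)$ with $(1,0,0,0)$ and $(0,1,0,0)$ irrational, but rational detour vertices with third coordinate $1$ lying in the hull once a third agent is present). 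So the pruning claim is false as a generic graph-theoretic statement, and to obtain it for the walks that \alg actually produces you would need a new lemma about the dynamics that neither you nor the paper supplies. Note also that step 4 demands strictly more than the paper's proof establishes: reading $1$-connectedness, as that proof implicitly does, as the existence of a unit-step chain of rational judgments linking the two profile judgments, your steps 1--3 already finish the argument, and the multi-agent ``zig-zag elimination'' you anticipate is not needed; your two-agent monotonicity observation is correct but does not extend, and the paper never attempts such an extension.
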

\begin{proof}
Assume that the algorithm terminates  with a $\Js^{\ast}$-unanimous $\Pf$ at some $t$. In each  $t'<t$, every agent  either keeps  her own judgment $\Pf[i]$, or moves to a new $\Js\in\Dmc$ with  $d(\Pf[i], \Js) = 1$. Since a $\Js^{\ast}$ is reached by every agent, there  must exist a  $1$-connected path between any two  judgments in $\Pf^0$.   Thus $\Pf^0$ must be $1$-connected. \end{proof}

Note also that if $\Pf^0$ is $\epsilon$-connected, so is $\Pf$ at  any step $t > 0$.
The interesting question is: what are the sufficient conditions for reaching consensus by \alg?
We address the question in Section~\ref{sec:termination}.


\section{Reaching Consensus}\label{sec:termination}

In this section, we examine the sufficient conditions for reaching consensus by \alg. We begin by looking at the iteration over the complete agenda graph $\gag^c$, and then we move on to the more interesting cases of $\gagh$ and $\gag^m$.

\subsection{Iteration with $\gag^c$}

%
\begin{theorem}\label{th:c1}
If $\Pf$ contains a \emph{unique plurality judgment $\Js$}, then \alg always reaches consensus in one step.
\end{theorem}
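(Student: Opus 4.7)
The plan is to reduce the claim to a short arithmetic check by first unpacking what the algorithm looks like on $\gag^c$. Because the path distance on $\gag^c$ is the drastic distance $d_c$, the interval between any two distinct judgments in $\Dmc$ consists of just the two endpoints, so $\CH(\Pf) = \{\Pf\}$, the set of judgments that actually occur in $\Pf$---all of which are rational by assumption. Consequently the argmin step of the algorithm only ranges over these finitely many candidates, and the edge condition $0 < d(\Js, \Js_i) \leq 1$ collapses to the single requirement $\Js \neq \Js_i$.

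Next I would compute $D(i, \Js, \Pf)$ in closed form. Writing $n_\Js := \#\{j \in N \mid \Pf[j] = \Js\}$, one gets $D(i, \Js, \Pf) = (n-1) - n_\Js$ whenever $\Js \neq \Pf[i]$, and $D(i, \Pf[i], \Pf) = n - n_{\Pf[i]}$. The uniqueness of the plurality judgment $\Js^{\ast}$ gives the strict bound $n_\Js \leq n_{\Js^{\ast}} - 1$ for every other $\Js \in \{\Pf\}$, and this single inequality drives the rest of the argument.

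I then split on whether agent $i$'s judgment is $\Js^{\ast}$. If $\Pf[i] \neq \Js^{\ast}$, the bound makes $\Js^{\ast}$ the unique minimizer of $D(i, \cdot, \Pf)$ over $\{\Pf\}$ and simultaneously forces $D(i, \Js^{\ast}, \Pf) < D(i, \Pf[i], \Pf)$, so $\moves = \{\Js^{\ast}\}$ and agent $i$ jumps to $\Js^{\ast}$ in this one iteration. If $\Pf[i] = \Js^{\ast}$, the same bound yields $D(i, \Js, \Pf) \geq D(i, \Js^{\ast}, \Pf)$ for every $\Js \neq \Js^{\ast}$, so the strict-improvement filter empties $\moves$ and agent $i$ stays. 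After one iteration every agent sits on $\Js^{\ast}$, the profile is unanimous, and the outer loop terminates. The only delicate point is the $\pm 1$ offset in $D$ that arises when the candidate $\Js$ coincides with $\Pf[i]$; once that bookkeeping is done, the strictness built into \emph{unique} plurality is exactly what the argmin and the strict-improvement test demand.
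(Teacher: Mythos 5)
Your proof is correct and follows essentially the same route as the paper's: on $\gag^c$ the distance of a candidate judgment to the rest of the profile is just a count of disagreeing agents, and the strictness coming from the unique plurality judgment $\Js^{\ast}$ forces its supporters to have empty \moves while every other agent has $\moves=\{\Js^{\ast}\}$. You are merely more explicit than the paper about the bookkeeping (that $\CH(\Pf)=\{\Pf\}$ on the complete graph and the $\pm 1$ offset in $D$ when the candidate equals the agent's own judgment), which is a welcome but not essentially different elaboration.
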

\begin{proof} On $\gag^c$, the path distance $d_c$ between any two judgments that are different is $1$.
Let $\Js$ be the unique plurality judgment in $\Pf \in \Dmc^n$, selected by $k$ agents.
For every $\Js_i=\Js\neq \Js_j$, we have $ D(i,\Js, \Pf)
=n-k
\le
D(i,\Pf[j], \Pf)$, so the agents selecting $\Js$ can not change their judgments.
Moreover, switching from $\Js_j$ to $\Js$ decreases the distance to $\Pf_{-j}$ most, so all the other agents will switch to $\Js$ in the first iteration. \end{proof}

What about starting profiles with several plurality judgments? They converge towards consensus under reasonable conditions.
\begin{theorem}\label{th:c2}
If $N$ consists of an odd number of agents, then \alg probabilistically reaches consensus, \ie it reaches consensus with probability 1.
\end{theorem}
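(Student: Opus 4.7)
The plan is to model the iteration as a Markov chain on the finite state space $\Dmc^n$ of profiles and to exhibit a uniform lower bound $\delta>0$ on the probability that any non-unanimous profile reaches a unanimous one within two steps; a standard finite-state argument then bounds the probability of not being absorbed after $2T$ steps by $(1-\delta)^T\to 0$.

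First I would read off the move rule on $\gag^c$. Since distinct rational judgments are at drastic distance $1$, one gets $\CH(\Pf)=\{\Pf\}$ and $D(i,\Js,\Pf)=|\{j\neq i:\Pf[j]\neq \Js\}|$. Writing $c(\Js)$ for the count of $\Js$ in $\Pf$, agent $i$ moves iff some $\Js\neq\Pf[i]$ satisfies $c(\Js)\ge c(\Pf[i])$, and in that case the agent picks one of the other top-count judgments, each choice having strictly positive probability by the algorithm's random tie-breaking. Agents at the unique top of $\Pf$ (if any) stay, and all others migrate to a top-count judgment. A short argument comparing $c(\Js^a)>c(\Js^b)+1$ with $c(\Js^b)>c(\Js^a)+1$ for any pair of distinct judgments in $\Pf$ rules out the ``nobody moves'' case when $\Pf$ is not unanimous, so the only absorbing states of the chain are the unanimous profiles.

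Next I would bound the two-step absorption probability from an arbitrary non-unanimous $\Pf$. If $\Pf$ has a unique top, Theorem~\ref{th:c1} gives consensus in one step. Otherwise $\Pf$ has $t\ge 2$ tied top judgments $\Js^1,\ldots,\Js^t$ of common count $k$, every agent moves, and the support of the resulting $\Pf'$ is contained in $\{\Js^1,\ldots,\Js^t\}$. For $t=2$ the $n-2k$ non-top agents split as $X_1+X_2=n-2k$ between $\Js^1$ and $\Js^2$; since $n$ is odd while $2k$ is even, the new counts $k+X_1$ and $k+X_2$ cannot coincide, so $\Pf'$ has a unique top with probability $1$. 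For $t\ge 3$, with strictly positive probability every agent not at $\Js^1$ chooses $\Js^1$ while every agent at $\Js^1$ chooses $\Js^2$, yielding counts $n-k$ and $k$; the inequality $k\le n/t\le n/3<n/2$ then forces a unique top. In both cases one further step, via Theorem~\ref{th:c1}, reaches consensus, so the two-step absorption probability from $\Pf$ is strictly positive.

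The main subtlety lies in the $t=2$ case: without the odd-$n$ hypothesis, two equally sized camps simply swap back to themselves and the chain can cycle deterministically, which is exactly why the theorem restricts to $n$ odd. Taking $\delta$ to be the minimum of the two-step absorption probabilities over the finite set of non-unanimous profiles closes the argument.
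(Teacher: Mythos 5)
Your proposal is correct and follows essentially the same route as the paper's proof: on $\gag^c$ the dynamics reduce to plurality counts (since $\CH(\Pf)=\{\Pf\}$ and all distinct judgments are at distance $1$), the odd number of agents rules out the two-equal-camps deadlock, and the probability of remaining without a unique plurality judgment decays geometrically, after which Theorem~\ref{th:c1} finishes in one step. Your explicit two-step absorption bound --- with the split into two tied plurality judgments (resolved deterministically because $n-2k$ is odd) and three or more tied plurality judgments (resolved by a positive-probability joint choice) --- is simply a sharper, more rigorous rendering of the paper's assertion that some $\delta'>0$ bounds the per-round probability of recovering a unique plurality, and both arguments rest on the same assumption that every available move is selected with probability bounded away from zero.
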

\begin{proof}
If there is a single plurality judgment in $\Pf$, then the algorithm converges in one step. If there are two or more plurality judgments $\Js^1,\dots,\Js^k \in \Dmc$, then those agents swap non-deterministically between $\Js^1,\dots,\Js^k$, and the other agents move to one of $\Js^1,\dots,\Js^k$. In the next round, the same argument applies. Eventually, $\Pf$ converges either to the unanimous profile $\Pf'$ such that  $\{ \Pf'\} = \{ \Js \}$ for some $\Js\in\{\Js^1,\dots,\Js^k\}$, or to a profile $\Pf''$ such that $\{ \Pf''\} = \{\Js^1,\dots,\Js^m\}$ for an odd $m$, each favoured by the same amount of agents. From then on, the agents keep swapping judgments until one judgment gets plurality in the profile, and wins in the next round.

\new{Formally, let $\moves_{i,\marija{t}}$ be the set of moves available to agent $i$ at \marija{ the step  $t$ of \alg}. We assume that there is some $\delta>0$ such that, for each \marija{step $t$}, agent $i$ selects judgment  $\Js\in\moves_{i,\marija{t}}$ with probability $p_i(\Js) \ge \delta$.
Then, there exists $\delta'>0$ such that the probability of all the agents ``hitting'' a profile with no plurality in the next round is at most $1-\delta'$. Hence, the probability that the profile stays with no plurality in $m$ steps is at most $(1-\delta')^m$, which converges to 0 as $m$ increases.} \end{proof}

The \alg has good convergence  properties on $\gag^c$ but the consensus it reaches is limited to the judgments that are already in the starting profile. On the Hamming and Model agenda graphs \alg surpasses this limitation. However, its convergence becomes a subtler issue.

\subsection{Iteration with $\gagh$ and $\gag^m$}\label{sec:termination-geodesic}

In this section, we will use $\gag$ to refer to one of $\gag^m,\gagh$ in order to avoid stating and proving the same properties for $\gag^m$ and $\gagh$ separately when the same proof can be applied.

We start with a negative result.
Let us call {\em equidistant} those profiles $\Pf$ such that, for any $i,j,r \in N$ with $i \neq j$, $i \neq r$, $j \neq r$, it holds that $d(\Pf[i], \Pf[j]) = d(\Pf[i], \Pf[r])$.

\putaway{
  THE PROPOSITION IS INCORRECT, AND THE FIGURE ACTUALLY PRESENTS A COUNTEREXAMPLE
  \begin{proposition}
  If $\Pf^0$ is equidistant,  then \alg does not terminate with a consensus profile.
  \end{proposition}
  \begin{proof} If $\Pf^0$ is a $1$-connected equidistant profile, either every triple of judgments in the profile forms a $k$-cycle, where $k= 3d(\Pf^0[i], \Pf^0[j])$ or every judgment in $\Pf^0$ is at a distance $r$  from a $k$-cycle, where $k =3(d(\Pf^0[i], \Pf^0[j]) -r)$. Figure~\ref{fig:cycle} illustrates an example of this case.

  \begin{figure}[h!]\centering
   \includegraphics[width=0.45\textwidth]{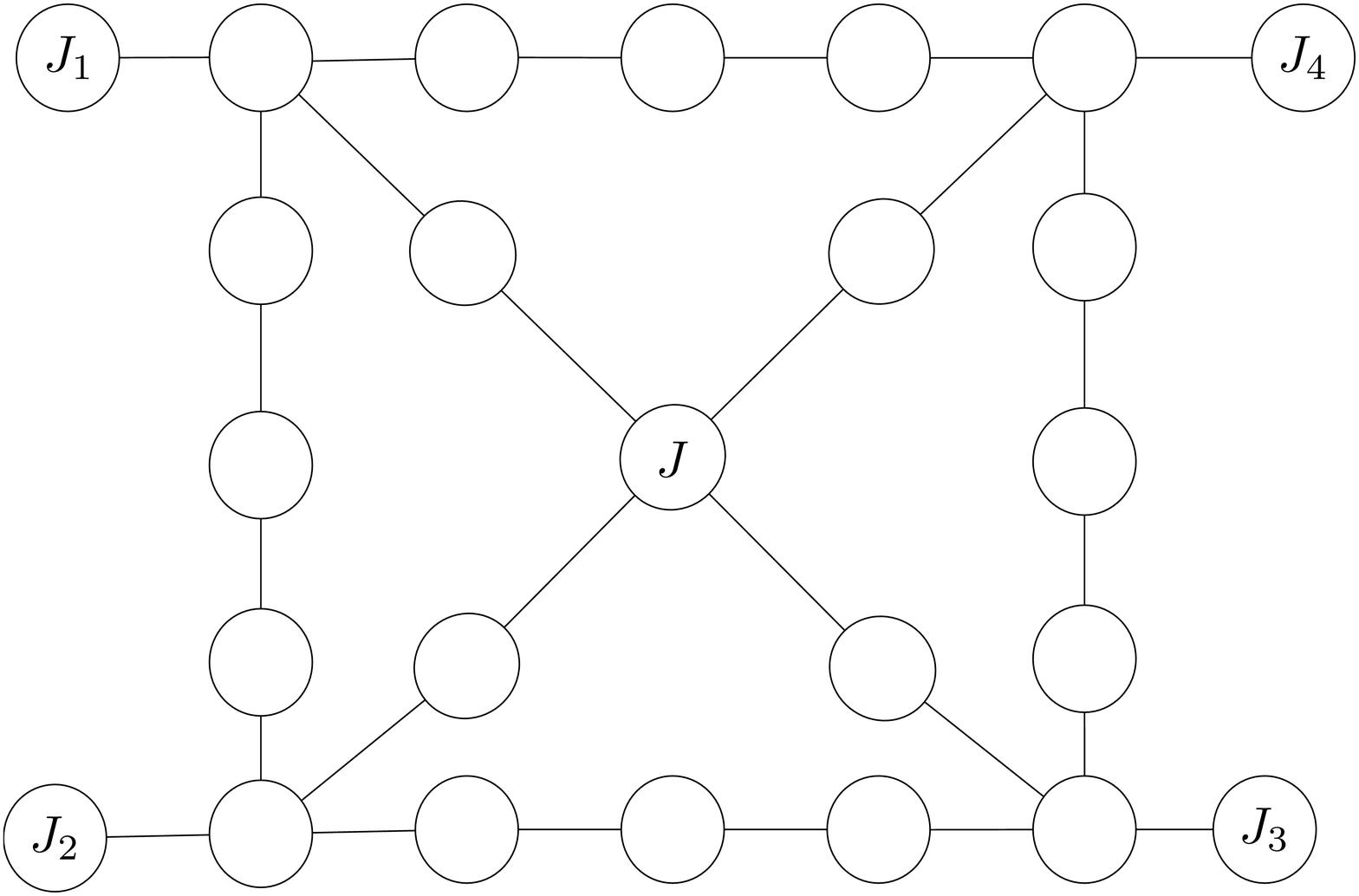}
   \caption{An agenda graph of an equidistant profile $\Pf = ( \Js_1, \Js_2,\Js_3, \Js_4 )$. }\label{fig:cycle}
  \end{figure}

  In the first case the \alg terminates after the starting profile, because moving from any $\Pf^0[i]$ to some $\Js$ towards any judgment $\Pf^0[j]$ in the profile increases the distance $d(\Pf^0[i],\Pf^0[k])$ by one to $n-2$ judgments $\Pf^0[k]$ (n-1 is the degree of each vertex $\Pf[k]$ in an equidistant $\Pf$). Thus for all $i$ and $\Js \in \CH(\Pf^0)$ with $d(\Js, \Pf^0[i])=1$, $D(i, \Js, \Pf^0[i]) \geq D(i, \Pf^0[i], \Pf^0)$ and all the agents just keep their judgments with the \alg terminating because of the condition that two subsequent profiles are the same.

  In the second case each agent will approach its associated $k$-cycle with each step. After $r$ steps, an equidistant  $\Pf$ which has every triple of judgments  forming an $k$-cycle, where $k= 3d(\Pf[i], \Pf[j])$, is reached and the algorithm terminates without consensus for the same reasons as in the first case. \end{proof}
}

\begin{proposition}
Consider $N=\{ 1, 2,3\}$ agents and a $1$-connected $\Pf^0$. If \alg reaches a $\Pf$ that is an equidistant $k$-cycle, then $\alg$ will not terminate with a consensus.
\end{proposition}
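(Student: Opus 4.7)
The plan is to show that, once $\Pf$ becomes an equidistant $k$-cycle with three agents, every candidate move is distance-neutral, so $\moves=\emptyset$ for each agent, the profile stops evolving, and the \textbf{repeat--until} guard terminates \alg with a non-unanimous $\Pf$.

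First, I would pin down the geometry. Equidistance together with the fact that the three profile judgments lie on a simple cycle of length $k$ forces them to partition the cycle into three equal arcs, so $k=3d$ where $d=d(\Pf[i],\Pf[j])$ for any $i\neq j$. By Observation~\ref{col:cycle} the case $k=3$ is excluded, hence $d\geq 2$. Since $\Pf$ is a $k$-cycle, $\CH(\Pf)$ is exactly the vertex set of that cycle, and because the hull is convex the path distance in $\gag$ between any two hull vertices coincides with their distance along the cycle.

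Second, I would evaluate every candidate move. The rule only admits a move to some $\Js\in\CH(\Pf)\cap\Dmc$ with $d(\Js,\Pf[i])=1$, so the only candidates for agent $i$ are the (at most two) cycle-neighbours of $\Pf[i]$ that happen to be rational. Placing $\Pf[1],\Pf[2],\Pf[3]$ at cycle positions $0,d,2d$ and stepping agent $1$ to position $1$, the cycle distance becomes $d-1$ to $\Pf[2]$ and $\min(2d-1,d+1)=d+1$ to $\Pf[3]$ (using $d\geq 2$). Hence $D(1,\Js,\Pf)=(d-1)+(d+1)=2d=D(1,\Pf[1],\Pf)$, and the symmetric computation yields the same equality for the opposite cycle-neighbour and for agents $2$ and $3$.

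Third, I would close the loop. The condition $D(i,\Js,\Pf)<D(i,\Pf[i],\Pf)$ appearing in the second set of the definition of $\moves$ fails for every candidate, so $\moves=\emptyset$ for each agent; no $\Js_i$ is updated; the next execution of the body starts with $\Pf'=\Pf$ and the \textbf{repeat--until} guard fires. Because $d\geq 2$ the three judgments are pairwise distinct, so the returned $\Pf$ is not unanimous and consensus is not reached.

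The point that needs some care is the $\gagh$ case, where the cycle may contain non-rational vertices. A non-rational cycle-neighbour is simply dropped from the candidate set, which only shrinks $\moves$; the rational neighbour (if any) is still distance-neutral by the calculation above, so the conclusion $\moves=\emptyset$ is unaffected. In $\gag^m$ this subtlety does not arise since every vertex is rational by construction.
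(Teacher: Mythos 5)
Your proposal is correct and follows essentially the same approach as the paper's (much terser) proof: on an equidistant $k$-cycle, any unit move toward one agent increases the distance to the other by the same amount, so the strict-improvement condition fails, $\textrm{\moves}=\emptyset$ for every agent, and the algorithm halts on the non-unanimous profile. Your write-up merely fills in the details the paper leaves implicit (the $k=3d$ geometry, the explicit distance computation, and the rationality caveat on $\gagh$).
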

\begin{proof}
If $\Pf$ is an equidistant $k$-cycle then no agent can reduce the distance to one agent by 1 without increasing the distance to the other agent by 1. Thus no agent has a possible move.\end{proof}

\begin{figure}[h!]\centering
 \includegraphics[width=0.5\textwidth]{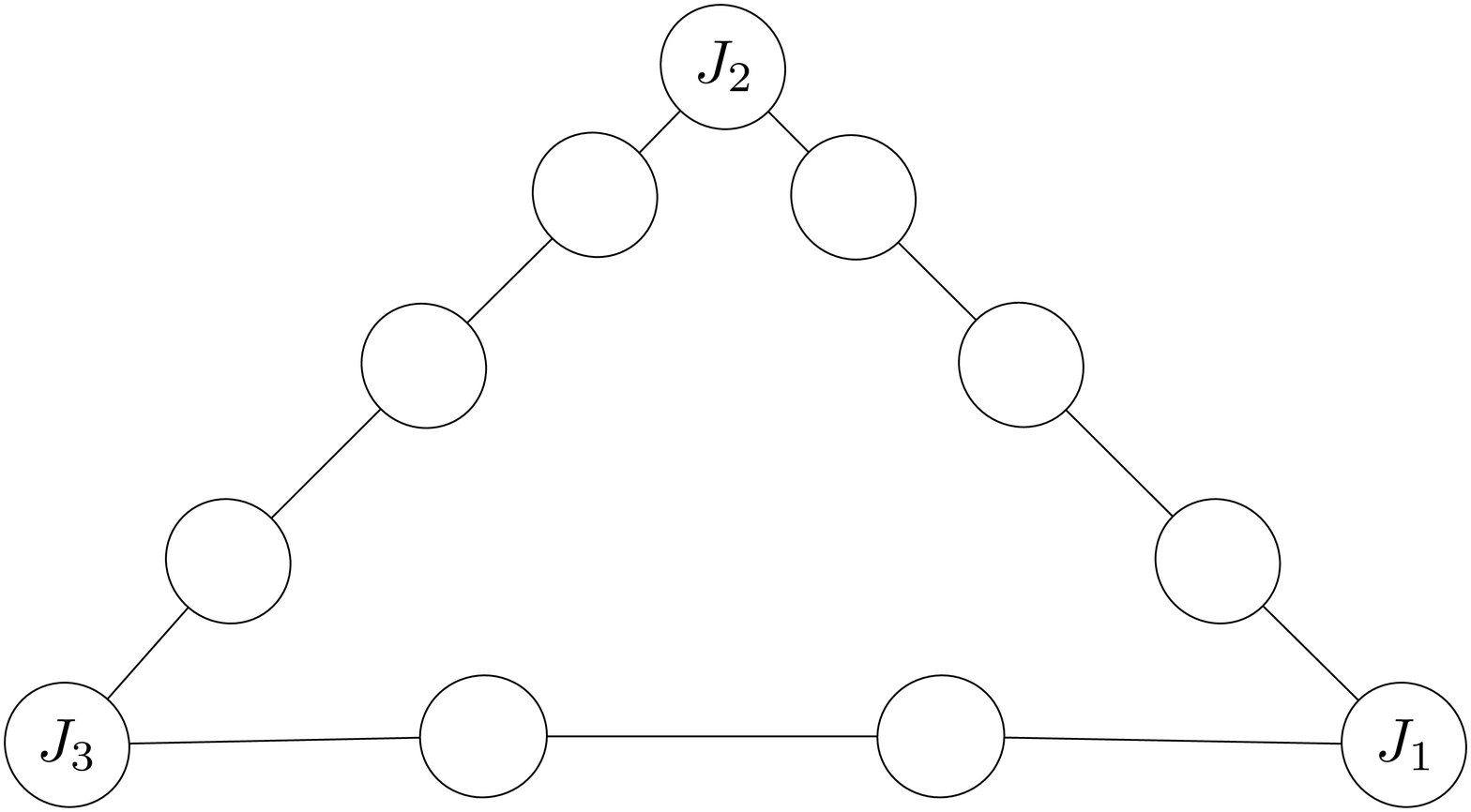}
 \caption{An agenda graph of a non-equidistant profile $\Pf = ( \Js_1, \Js_2,\Js_3 )$ }\label{fig:triangle}
\end{figure}

Note that the same applies to some non-equidistant $k$-cycles. For example, the profile in Figure~\ref{fig:triangle} is not equidistant, but it is easy to check that each agent has an empty set of moves.

3-agent profiles that form a simple cycle are problematic because an agent may not be able to get closer to one of the other agents without distancing itself from the third. For  profiles of more than three agents that form a simple cycle, the judgments cannot be equidistant, and this is no longer a problem.

\begin{lemma}\label{lem:cycleP}
If $\Pf$ is a (1-connected) $k$-cycle for $n > 3$ agents at step $t$ of \alg, then the set \moves at $t$ is nonempty for some  $i \in N$.
\end{lemma}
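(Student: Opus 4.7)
The plan is to argue by contradiction, so suppose that at step $t$, the set $\moves_i$ is empty for every $i \in N$. I will derive a contradiction with the assumption that $\Pf$ is a $k$-cycle and $n > 3$.

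First, I translate the no-move hypothesis into a local geometric condition on the cycle $\CH(\Pf)$. Writing $g(v) = \sum_{j \in N} d(v, P[j])$, we have $D(i, u, \Pf) = g(u) - d(u, P[i])$, so for $u = P[i]$ this equals $g(P[i])$, and for a cycle-neighbor $v'$ of $P[i]$ this equals $g(v') - 1$. The move condition $D(i, v', \Pf) < D(i, P[i], \Pf)$ therefore reduces to $g(v') \leq g(P[i])$. The assumption that no agent can move thus amounts to saying that every occupied vertex of the cycle is a \emph{strict} local minimum of $g$, i.e.\ both cycle-neighbors of $P[i]$ have strictly larger $g$-value.

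Second, I would exploit the specific structure of $g$ on a cycle. Writing $\Delta(v) = g(v+1) - g(v)$, one computes directly that moving from $v$ one step clockwise changes the distance to each other agent by $-1, 0,$ or $+1$, giving $\Delta(v) = a_v + |\{j : P[j] \text{ is CCW-side of } v\}| - |\{j : P[j] \text{ is CW-side of } v\}|$ with a specific treatment of antipodal (even $k$) and near-antipodal (odd $k$) positions. The strict local minimum condition at each occupied $v$ translates to $\Delta(v) \geq 1$ and $\Delta(v-1) \leq -1$. Summing over the cycle gives $\sum_v \Delta(v) = 0$ (telescoping), and taking weighted sums against the agent-count function $a_v$ yields aggregate inequalities relating $n$, the number $r$ of distinct occupied vertices, and the count of agent pairs at (near-)diameter distance $\lfloor k/2 \rfloor$.

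Third, I would combine these aggregate inequalities with the hypothesis that $\CH(\Pf)$ is a full simple $k$-cycle. The latter forces the agents' positions to \emph{span} the cycle: in particular, no arc of length less than $k$ can contain $\{\Pf\}$, and consequently some pair of agents must lie at (or very near) diameter distance so that their shortest paths jointly cover the whole cycle. This requirement, combined with the telescoping constraint and the strict local minimum conditions, is incompatible with $n > 3$: the pigeonhole argument on the $n - 1 \geq 3$ non-self agents forces at least one occupied vertex to have an unbalanced split between its clockwise and counterclockwise sides, so that $\Delta(v) \leq 0$ or $\Delta(v-1) \geq 0$ at that vertex, yielding the desired move and contradicting our assumption. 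The role of $n > 3$ is crucial, since by Observation~\ref{col:cycle} the cycle length satisfies $k \geq 4$, and the previous proposition shows exactly that the argument can fail when $n = 3$ and $\Pf$ is an equidistant cycle (the three-way split can be perfectly symmetric).

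The main obstacle I expect will be handling configurations with multiplicities (several agents at the same vertex) and profiles that retain a nontrivial rotational symmetry of the cycle, since in such cases the local balance conditions can remain tight at every occupied vertex. To overcome this, my plan would be to first establish the claim in the cleaner case where all $n$ agents sit on distinct cycle vertices — here the pigeonhole argument is immediate since $n-1$ other agents cannot split evenly unless additional near-antipodal pairs occur — and then reduce the multiplicity case by focusing on a \emph{plurality} vertex $v^\ast$ with the largest $a_{v^\ast}$. From the perspective of any agent not at $v^\ast$, the concentration of weight at $v^\ast$ produces a strict pull that cannot be cancelled under the hull-spanning requirement.
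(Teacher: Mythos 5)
Your first step is sound: writing $g(v)=\sum_{j\in N} d(v,\Pf[j])$, the absence of a move for agent $i$ is exactly the statement that both cycle-neighbours $v'$ of $\Pf[i]$ satisfy $g(v')\geq g(\Pf[i])+1$, so ``no agent can move'' is equivalent to ``every occupied vertex is a strict local minimum of $g$''. The genuine gap is in your second and third steps: the ``aggregate inequalities'' and the decisive pigeonhole are never derived, and in fact they cannot be derived at this level of generality, because the combinatorial incompatibility you assert is false. Consider a $10$-cycle $v_0,\dots,v_9$ (hull equal to the whole cycle, hence $1$-connected) with $n=5$ agents, one at each of $v_0,v_2,v_4,v_6,v_8$. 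For the agent at $v_0$ we get $D(i,v_0,\Pf)=2+4+4+2=12$, while $D(i,v_1,\Pf)=1+3+5+3=12$ and $D(i,v_9,\Pf)=3+5+3+1=12$, so the set \moves is empty for that agent; by rotational symmetry it is empty for every agent, even though $n=5>3$. Equivalently, $g(v_0)=12<13=g(v_1)=g(v_9)$: every occupied vertex is a strict local minimum of $g$, so the hypothesis of your contradiction argument is satisfiable. Your heuristic that spanning the cycle forces a (near-)antipodal pair whose pull creates an unbalanced split is precisely what breaks down here: the farthest profile pairs are at distance $4<k/2=5$, each occupied vertex sees a perfectly even $2$--$2$ split of the remaining agents, and the two far agents contribute $+1$ and $-1$ in either direction, so every local balance condition is tight. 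Hence $n>3$ alone does not rule out such balanced configurations, and the contradiction you hope to extract from telescoping plus pigeonhole does not exist.

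For comparison, the paper argues directly rather than by contradiction: it takes a peripheral $\Pf[i]$ and an antipodal $\Pf[j]$, lets $\Js_1,\Js_2$ be the neighbours of $\Pf[i]$ on the two cycle paths to $\Pf[j]$, and counts profile judgments on the two paths; in the even-split case it relies on $d(\Js_1,\Pf[j])=d(\Js_2,\Pf[j])=d(\Pf[i],\Pf[j])-1$, which holds only when $d(\Pf[i],\Pf[j])$ is exactly half the cycle length. The configuration above lies exactly in the regime ($mx_d(\Pf)<k/2$ with an even split) that neither your sketch nor that case analysis covers, so any correct completion of your plan would need an additional ingredient that you never invoke: either extra hypotheses on the profile (an antipodal pair at exactly distance $k/2$, or an odd/unbalanced split of the other agents), or realizability constraints showing that such placements cannot arise as $\CH(\Pf)$-induced cycles in $\gagh$ or $\gag^m$. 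As written, the step that carries the entire content of the lemma is asserted rather than proved, and as a purely cycle-combinatorial claim it is false.
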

\begin{proof}
Take any judgment $\Pf[i]$ which is peripheral in $\CH(\Pf)$, and consider its antipodal judgment $\Pf[j]$. Let $\mathbf{p_1},\mathbf{p_2}$ be the two paths from $\Pf[i]$ to $\Pf[j]$ in $\CH(\Pf)$,
and let $\Js_m \in \CH(\Pf)$ be the node adjacent to $\Pf[i]$ on path $\mathbf{p_m}$, $m=1,2$.
We have that  $d(\Js_1,\Pf[r]) = d(\Pf[i], \Pf[r])-1$ for every $\Pf[r]$ on $\mathbf{p_1}$, while $d(\Js_2,\Pf[s]) = d(\Pf[i], \Pf[s]) +1$ for every  $\Pf[s]$ on $\mathbf{p_2}$. If there are more profile judgments on $\mathbf{p_1}$ than on $\mathbf{p_2}$, then $\Js_1 \in \textrm{\moves}$, otherwise   $\Js_2 \in \textrm{\moves}$. If there are exactly as many judgments on $\mathbf{p_1}$ as there are on $\mathbf{p_2}$, then both  $\Js_1 \in \textrm{\moves}$ and  $\Js_2 \in \textrm{\moves}$ because in that case  $d(\Js_1,\Pf[j]) = d(\Js_2, \Pf[j]) = d(\Pf[i],\Pf[j]) - 1$  and consequently $D(i, \Js, \Pf) = D(i, \Js', \Pf) < D(i, \Pf[i], \Pf)$. \end{proof}

Let us consider the case of a $1$-connected $\Pf^0$ for $|N| > 3$. Let $\Pf$ be the profile produced by \alg at step $t\geq 0$, and let $\Pf'$ be the profile produced by \alg at step $t+1$. We begin by showing that for graphs in which no judgment has a degree higher than 2, it is never the case that $\Pf=\Pf'$, \ie there exist at least one agent for which $\textrm{\moves} = \emptyset$ for $\Pf$.

\begin{lemma}\label{lem:mov}
Let $\Pf$ be a profile for $n > 3$ agents, produced by \alg at step $t\geq 0$, and let $\Pf'$ be the profile produced by \alg at step $t+1$. Assume that $\Pf$ is $1$-connected on $\gag$.   If  the $\CH(\Pf)$-induced subgraph of $\gag$ is such that no vertex in it has a degree higher than two, and $mx_d(\Pf) >1$, then  $\Pf \neq \Pf'$.
\end{lemma}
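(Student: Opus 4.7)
The plan is to show that, under the stated hypotheses, the $\CH(\Pf)$-induced subgraph $H$ of $\gag$ must be either a simple cycle or a simple path, and then to exhibit at least one agent whose \moves set at step $t$ is nonempty, so that the algorithm produces a strictly different profile at step $t+1$.

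First I would establish that $H$ is connected. For any two vertices $u,v\in\CH(\Pf)$, the interval $I_{\A,\Ct}[u,v]$ sits inside $\CH(\Pf)$ by convexity, and every shortest $u$-$v$ path in $\gag$ lies entirely inside that interval, so $H$ contains a $u$-$v$ path. A connected graph of maximum degree at most $2$ is either a simple cycle or a simple path, giving the required dichotomy. If $H$ is a cycle, then since $\Pf$ is $1$-connected and $n>3$, Lemma~\ref{lem:cycleP} directly yields an agent with nonempty \moves, and we are done.

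Suppose instead that $H$ is a path $v_0,v_1,\dots,v_L$. Because $mx_d(\Pf)>1$ we have $L\ge 2$; moreover, both endpoints must carry profile judgments, for otherwise removing such an endpoint would yield a strictly smaller convex set still containing $\{\Pf\}$, contradicting minimality of $\CH(\Pf)$. Let $w_l=\#\{j\in N\mid \Pf[j]=v_l\}$, so $w_0,w_L\ge 1$ and $w_0+w_L\le n$; hence $\min(w_0,w_L)\le n/2$. Assume without loss of generality that $w_0\le n/2$, pick any agent $i$ with $\Pf[i]=v_0$, and consider its only neighbour $v_1$ in $H$. A direct count of the contributions to $D(i,\cdot,\Pf)$ gives
\[
D(i,v_1,\Pf)-D(i,v_0,\Pf) \;=\; (w_0-1)\cdot 1 + (n-w_0)\cdot(-1) \;=\; 2w_0-n-1 \;\le\; -1,
\]
so $v_1\in\moves$ for agent $i$, which forces $\Pf'\neq\Pf$.

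The main obstacle I anticipate is verifying that $v_1$ is actually admissible as a move in the $\gagh$ case, where vertices of $H$ need not be rational. This can be patched using the $1$-connectedness of $\Pf$: inside the interval $I_{\A,\Ct}[v_0,v_L]$ the nearest rational judgment to $v_0$ must be at distance exactly $1$, and because $v_0$ has a unique neighbour in $\CH(\Pf)$ (namely $v_1$), this forces $v_1\in\Dmc$, making the move legal. The connectedness of $H$ and the identification of path endpoints as profile judgments are the subtle structural points; the counting argument itself is then short.
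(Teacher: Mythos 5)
Your proof is correct and follows essentially the same route as the paper's: your path-versus-cycle dichotomy for the $\CH(\Pf)$-induced subgraph matches the paper's split into ``some peripheral judgment has degree 1'' (Case a) versus ``all peripheral judgments have degree 2'' (Case b), the cycle case is handled by Lemma~\ref{lem:cycleP} exactly as the paper does, and the rationality of the adjacent move on $\gagh$ is extracted from $1$-connectedness in the same way. The only notable difference is that your endpoint count $2w_0-n-1\le -1$ at the lighter endpoint is slightly more careful than the paper's Case a, which asserts the move decreases the distance to every other agent and thereby tacitly ignores agents co-located at the degree-1 peripheral vertex.
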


\begin{proof}~
We show that at least the agent $i \in N$  with a peripheral judgment for $\Pf$  has a possible move in $\Pf$.

\begin{description}
\item[Case a.] There exists a peripheral judgment  in $\Pf$ with degree 1, belonging to   $i\in N$.\\
Let $\Pf[j]$ be an antipodal of $\Pf[j]$.
Since $d(\Pf[i], \Pf[j]) >1$ and $\Pf$ is $1$-connected, there must exist exactly one judgment $\Js \in \Dmc$, such that $d(\Pf[i], \Js)=1$ and that is between judgments $\Pf[i]$ and $\Pf[j]$. For every $r \in N$, $r\neq i$ it holds that $d(\Js, \Pf[r]) = d(\Pf[i], \Pf[r])-1$. Thus $\Js$ is a move for $\Pf[i]$.

\item[Case b.] All peripheral judgments in $\Pf$ are with degree 2. Consider the antipodal judgments $\Pf[i]$ and $\Pf[j]$. There are exactly two shortest paths connecting them: $\mathbf{p_1}$ and $\mathbf{p_2} $. All other profile judgments $\Pf[r]$ are: either on  $\mathbf{p_1}$,  or on $\mathbf{p_2}$, or  have a shortest path to $\Pf[j]$ that intersects either $\mathbf{p_1}$  or $\mathbf{p_2}$, possibly both. We can apply the same reasoning as in the proof of Lemma~\ref{lem:cycleP}.

Consider $\Js \in \CH(\Pf)$ adjacent to $\Pf[i]$ on $\mathbf{p_1}$ and  $\Js' \in \CH(\Pf)$ adjacent to $\Pf[i]$ on $\mathbf{p_2}$. We have that  $d(\Js,\Pf[r]) = d(\Pf[i], \Pf[r]) -1$ for every     $\Pf[r]$ on $\mathbf{p_1}$ or whose shortest path to $\Pf[j]$ intersects $\mathbf{p_1}$, while $d(\Js,\Pf[s]) = d(\Pf[i], \Pf[s]) +1$ for every  $\Pf[s]$ on $\mathbf{p_2}$ or whose shortest path to $\Pf[j]$ intersects $\mathbf{p_2}$, but does not intersect $\mathbf{p_1}$. If there are more agents $r$ than agents $s$, then $\Js \in \textrm{\moves}$, otherwise   $\Js' \in \textrm{\moves}$. If there are exactly as many agents $r$ as agents $s$, then both  $\Js \in \textrm{\moves}$ and  $\Js' \in \textrm{\moves}$ because in that case  $d(\Js,\Pf[j]) = d(\Js', \Pf[j]) = d(\Pf[i],\Pf[j]) - 1$  and consequently $D(i, \Js, \Pf) = D(i, \Js', \Pf) < D(i, \Pf[i], \Pf) $.\end{description} \end{proof}

Observe that if the $\CH(\Pf^0)$-induced subgraph on $\gag$ is such that every vertex in it is of degree at most two, then for every subsequently constructed $\Pf$ in \alg, it will hold that the $\CH(\Pf)$-induced subgraph on $\gag$ is such that every vertex in it is of degree at most two. This is due to the fact that, at each step of \alg, the agents can only chose judgments from the   $\CH(\Pf^0)$.

From Observation~\ref{col:cycle} we know that a   $\{\Pf\}$-induced subgraph of $\gagh$ and $\gag^m$ does not have 3-cycles. If the $\CH(\Pf^0)$-induced subgraph  of   $\gagh$, respectively $\gag^m$ contains no $k$-cycles for $k > 3$, then this induced subgraph contains no cycles and it is by definition a tree. From the Case a. of the proof of Lemma~\ref{lem:mov}, we immediately obtain the following corollary.

\begin{corollary}\label{lem:tree} Let $\Pf$ be a profile produced in \alg at step $t\geq 0$ and let $\Pf'$ be profile produced in \alg at step $t+1$. Assume that $\Pf$ is $1$-connected on $\gag$.   If  the $\CH(\Pf)$-induced subgraph of $\gag$ is a tree, and $mx_d(\Pf) >1$, then  $\Pf \neq \Pf'$.
\end{corollary}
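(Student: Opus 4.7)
The plan is to reduce this corollary directly to Case a.\ of the proof of Lemma~\ref{lem:mov}. The key observation is that in a tree, every peripheral vertex has degree exactly one. I would prove this observation as follows: if $T$ is the $\CH(\Pf)$-induced subgraph of $\gag$, regarded as a tree, and $\Pf[i]$ is a peripheral judgment for $\Pf$, let $\Pf[j]$ be an antipodal judgment of $\Pf[i]$, so $d(\Pf[i],\Pf[j]) = mx_d(\Pf)$. If $\Pf[i]$ were not a leaf of $T$, it would have at least two neighbours $\Js^a, \Js^b$ in $T$. Since $T$ is a tree, the unique path from $\Pf[j]$ to $\Pf[i]$ enters $\Pf[i]$ through at most one of $\Js^a, \Js^b$; the other neighbour, say $\Js^a$, satisfies $d(\Js^a, \Pf[j]) = d(\Pf[i],\Pf[j]) + 1 > mx_d(\Pf)$, contradicting the definition of $mx_d$. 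Hence $\Pf[i]$ has degree $1$ in $T$.

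Once this is established, I would appeal directly to Case a.\ of the proof of Lemma~\ref{lem:mov}: because $\Pf$ is $1$-connected and $mx_d(\Pf) > 1$, the unique neighbour $\Js \in \Dmc$ of $\Pf[i]$ in $T$ lies strictly between $\Pf[i]$ and $\Pf[j]$. By the betweenness property used in the lemma (recall that path distances on $\gagh$ and $\gag^m$ satisfy $d(\Pf[i], \Pf[r]) = d(\Pf[i], \Js) + d(\Js, \Pf[r])$ whenever $\Js$ lies on a shortest path from $\Pf[i]$ to $\Pf[r]$, and in a tree the shortest path is unique so every other $\Pf[r] \in \{\Pf\}\setminus\{\Pf[i]\}$ has its shortest path to $\Pf[i]$ passing through $\Js$), we get $d(\Js, \Pf[r]) = d(\Pf[i], \Pf[r]) - 1$ for every $r \neq i$. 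Summing over $r$ yields $D(i,\Js,\Pf) = D(i,\Pf[i],\Pf) - (n-1) < D(i,\Pf[i],\Pf)$, so $\Js \in \textrm{\moves}$ for agent $i$.

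Therefore, agent $i$ strictly changes her judgment at step $t+1$, which immediately gives $\Pf \neq \Pf'$.

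The only step I expect to require care is the claim that \emph{every} other profile judgment $\Pf[r]$ has its unique tree-path to $\Pf[i]$ passing through $\Js$; this is where the hypothesis that the induced subgraph is a tree (and hence admits unique paths) does the real work, and where the argument genuinely differs from the cyclic Case b.\ of Lemma~\ref{lem:mov}. Everything else is a direct transcription of the earlier lemma's argument.
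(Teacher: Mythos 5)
Your overall route is the same as the paper's: reduce the corollary to Case~a of the proof of Lemma~\ref{lem:mov} via the observation that peripheral judgments of $\Pf$ must be leaves of the tree; the paper states this reduction in one line, and you attempt to make the ``peripheral $\Rightarrow$ degree $1$'' step explicit. However, your justification of that step has a genuine gap. You derive $d(\Js^a,\Pf[j]) = d(\Pf[i],\Pf[j])+1 > mx_d(\Pf)$ and declare this to ``contradict the definition of $mx_d$''. It does not: $mx_d(\Pf)$ is the diameter of the profile set $\{\Pf\}$, whereas $\Js^a$ is merely a vertex of $\CH(\Pf)$ and need not be any agent's judgment, so the definition of $mx_d(\Pf)$ says nothing about $d(\Js^a,\Pf[j])$. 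Nor is the implicit bound (``no hull vertex is farther from a profile judgment than the profile diameter'') true in general: in $\gagh$ for three unconstrained issues, the hull of $\{(0,0,0),(1,1,0),(0,1,1)\}$ is the whole cube, and its vertex $(1,1,1)$ is at distance $3$ from $(0,0,0)$ although the profile diameter is $2$. So the inequality you obtain is not absurd by itself; it only becomes absurd once the tree structure and the \emph{minimality} of the convex hull are brought in, and your argument never invokes the latter.

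The repair is short. Consider the branch of the tree $T$ hanging off $\Pf[i]$ through $\Js^a$, \ie the component of $T$ minus $\Pf[i]$ that contains $\Js^a$. If that branch contained no profile judgment, deleting it would leave a set that is still convex (paths between the remaining vertices cannot enter a pendant branch) and still contains $\{\Pf\}$, contradicting the minimality of $\CH(\Pf)$. Hence the branch contains some $\Pf[r]\in\{\Pf\}$, and in the tree $d(\Pf[r],\Pf[j]) = d(\Pf[r],\Pf[i]) + d(\Pf[i],\Pf[j]) \geq 1 + mx_d(\Pf)$, which now genuinely contradicts the definition of $mx_d(\Pf)$ because both endpoints are profile judgments. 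With this ingredient your observation, and hence the reduction to Case~a, goes through, and it makes explicit exactly what the paper's one-line proof glosses over. (A further small point, which you share with the paper's own Case~a: the identity $D(i,\Js,\Pf) = D(i,\Pf[i],\Pf)-(n-1)$ presumes that no other agent currently holds $\Pf[i]$; agents sitting on $\Pf[i]$ contribute $+1$ rather than $-1$, so in general one gets a strict decrease only for a suitably chosen leaf --- such a leaf always exists since every leaf of the hull carries at least one agent and at most one leaf can carry more than half of them --- rather than for every peripheral leaf.)
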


\begin{proof} The proof follows from the Case a. of the proof of Lemma~\ref{lem:mov} and the observation that: all the subgraphs of a tree are trees,  and the peripheral vertices of a tree have a degree 1. \end{proof}

We now need to show that not only the profile changes in each iteration, it also changes towards a consensus.

From Proposition~\ref{prop:mxd} we have that $mx_d(\CH(\Pf)) $ does not increase with each step of the \alg. It is possible that $mx_d(\CH(\Pf)) = mx_d(\CH(\Pf'))$ for $\Pf'$ being constructed  immediately after $\Pf$ in \alg. From the proof of  Lemma~\ref{lem:mov} we have the following corollary.

\begin{corollary}\label{lem:mcyc} Let $\Pf \in \Dmc^n$ be a profile produced in \alg at step $t\geq 0$ and let $\Pf' \in \Dmc^n$  be profile produced in \alg at step $t+1$.  If  $\{\Pf\} = \{\Pf'\}$, then the $\{\Pf\}$-induced graph of $\gag$   has  at least one    $k$-cycle, where $2m+ 2 \geq k \geq 2m$.
\end{corollary}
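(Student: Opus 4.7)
The plan is to obtain this corollary as a near-immediate consequence of Case~(a) in the proof of Lemma~\ref{lem:mov}, through a contrapositive argument. Writing $m := mx_d(\{\Pf\})$, I aim to show that if $\{\Pf\} = \{\Pf'\}$, then every peripheral judgment of $\Pf$ must have degree at least $2$ in the $\CH(\Pf)$-induced subgraph of $\gag$, and hence a cycle of length close to $2m$ is forced through a pair of antipodal peripheral judgments.

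First I would record two preparatory facts. (1) Agents sharing the same current judgment have identical sets \moves, because swapping the excluded index in $D(i, \Js, \Pf) = \sum_{j \neq i} d(\Js, \Pf[j])$ changes nothing when $\Pf[i] = \Pf[i']$. Hence every beneficial move at a peripheral vertex is taken by \emph{all} agents sitting on that vertex simultaneously. (2) By Proposition~\ref{prop:mxd}, $\CH(\Pf') \subseteq \CH(\Pf)$, so the induced subgraph cannot acquire new edges between the two steps. Now suppose, for contradiction, that some peripheral judgment $\Pf[i]$ has degree $1$ in $\CH(\Pf)$, with unique neighbor $\Js$ lying on a shortest path towards an antipodal $\Pf[j]$. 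By Case~(a) of Lemma~\ref{lem:mov}, the move $\Pf[i] \to \Js$ strictly reduces aggregated distance, so every agent at $\Pf[i]$ moves to $\Js$; by symmetry of $d$, the reverse move $\Js \to \Pf[i]$ strictly \emph{increases} aggregated distance, so no agent currently at $\Js$ wants to move to $\Pf[i]$. Since $\Pf[i]$ has no other neighbors in $\CH(\Pf)$, no agent can land on $\Pf[i]$ at step $t+1$, giving $\Pf[i] \notin \{\Pf'\}$ and contradicting $\{\Pf\} = \{\Pf'\}$.

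Therefore every peripheral judgment of $\Pf$ has degree $\geq 2$ in the induced subgraph. Fixing antipodal peripheral judgments $\Pf[i], \Pf[j]$ with $d(\Pf[i],\Pf[j]) = m$, and using that $\CH(\Pf)$ contains every shortest path between profile judgments, the degree-$2$ condition at $\Pf[i]$ yields two distinct shortest paths from $\Pf[i]$ to $\Pf[j]$; their concatenation is a closed walk of length $2m$ containing a simple cycle of length $k \geq 2m$. The main obstacle is the upper bound $k \leq 2m+2$: I would need to show that the \emph{shortest} such cycle through $\Pf[i]$ and $\Pf[j]$ cannot overshoot $2m$ by more than $2$. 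In $\gagh$ this should fall out from bipartiteness (forcing $k$ even, hence $k \in \{2m, 2m+2\}$) together with a tight analysis of how two distinct shortest paths between Hamming antipodals diverge and reconverge; in $\gag^m$ the analogous claim will require invoking the betweenness property behind Observation~\ref{col:cycle} and checking that two $\gag^m$-shortest paths between antipodals cannot reconverge after a detour of more than two model-edges.
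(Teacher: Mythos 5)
Your reduction to Case~(a) of Lemma~\ref{lem:mov} breaks at the step where you vacate a degree-one peripheral vertex. Both claims --- that every agent sitting at such a vertex strictly improves by moving to its unique neighbour, and that ``by symmetry of $d$'' no agent at the neighbour would move onto it --- fail, because $D(i,\Js,\Pf)$ depends on the multiplicities of judgments in $\Pf$, not only on the geometry of $\CH(\Pf)$. Take a path $\Js^a$--$\Js^b$--$\Js^c$ (say in $\gagh$) with three agents at $\Js^a$, one at $\Js^b$, one at $\Js^c$: this satisfies all the standing hypotheses of Lemma~\ref{lem:mov} ($n=5>3$, $1$-connected, maximum degree two, $mx_d=2>1$), yet an agent at the peripheral degree-one vertex $\Js^a$ has $D(i,\Js^b,\Pf)=3=D(i,\Js^a,\Pf)$, so her set \moves is empty, while the agent at $\Js^b$ has $D=4$ and strictly improves to $D=2$ by moving \emph{onto} $\Js^a$. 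Symmetry of the path distance does not transfer to $D$: the two agents exclude different indices and face different multiplicities, so the ``reverse move increases aggregated distance'' inference is a non sequitur, and the contradiction $\Pf[i]\notin\{\Pf'\}$ cannot be drawn this way. (The same multiplicity issue is glossed over in the paper's own Case~(a), but your argument leans on it much harder, using it both for all agents at the vertex and for the reverse move.)

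The second gap concerns the cycle length, which is the substantive content of the corollary. Even granting two distinct shortest paths of length $m=mx_d(\Pf)$ between an antipodal pair, their union is a closed walk of length $2m$ and hence contains a simple cycle of length \emph{at most} $2m$, not at least $2m$; in $\gagh$ two shortest paths may differ only in the order of two bit flips and produce nothing longer than a $4$-cycle. So your lower bound is asserted with the inequality reversed, and the upper bound $k\le 2m+2$ is explicitly left open. Note also that degree at least two of $\Pf[i]$ in the $\CH(\Pf)$-induced subgraph only guarantees a second neighbour lying on a shortest path towards \emph{some} profile judgment, not a second shortest path to the chosen antipodal $\Pf[j]$. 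For comparison, the paper gives no standalone proof: it reads the corollary off the proof of Lemma~\ref{lem:mov}, implicitly under that lemma's hypotheses (in particular maximum degree two and $mx_d(\Pf)>1$, which you also need, e.g.\ to exclude two-vertex ``swapping'' profiles with $mx_d=1$ and evenly split agents), the intended picture being that the occupied set can only survive a step in which antipodal agents move towards each other along the two arcs of a cycle through them. Your contrapositive-via-peripheral-degree strategy is in that spirit, but both of its pillars need repair before the statement is actually established.
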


Clearly if the agents whose judgments are antipodal in $\Pf$  can  choose to move towards each other via two different shortest paths between their judgments causing $\{\Pf\} = \{ \Pf'\}$. These agents however, also have the possibility to chose to move towards each  other on the same shortest path between their judgments.  As soon as two agents use the same shortest path, the $k$-cycle will be broken in the next step of the algorithm and $\{\Pf\} \neq \{ \Pf'\}$.

Let us consider the case when $mx_d(\Pf) =1$.

\begin{lemma}\label{lem:odd} Let $\Pf$ be a 1-connected profile for $n > 3$ agents at step $t$ with $mx_d(\Pf) =1$ and let $\Pf'$ be a profile obtain from it  by \alg at step $t+1$. If $n$ is odd then $\{ \Pf\} \neq \{ \Pf'\}$.
\end{lemma}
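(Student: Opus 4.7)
The plan is to reduce the situation to one with only two distinct judgments in the profile, and then to exploit the parity of $n$. I would begin by observing that $mx_d(\Pf) = 1$ forces every pair of distinct judgments in $\{\Pf\}$ to be at distance exactly $1$, so the $\{\Pf\}$-induced subgraph of $\gag$ is a clique. By Observation~\ref{col:cycle}, neither $\gagh$ nor $\gag^m$ admits a $3$-cycle inside the $\CH(\Pf)$-induced subgraph; hence $\{\Pf\}$ can contain at most two distinct judgments. Since \alg reaches step $t+1$, the profile $\Pf$ at step $t$ is not yet unanimous, so $\{\Pf\}$ contains exactly two distinct judgments $\Js^a, \Js^b$, adjacent in $\gag$. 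Because $d(\Js^a,\Js^b)=1$, the only shortest path between them is the edge itself, so $\CH(\Pf) = \{\Js^a,\Js^b\}$.

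Next I would use the oddness of $n$. Let $k$ be the number of agents holding $\Js^a$ in $\Pf$; since $n$ is odd, $k \neq n-k$, and without loss of generality $k > n-k$. This is the only place where the hypothesis on parity is needed, and it plays the same role as in Theorem~\ref{th:c2}: it prevents the split of agents between the two judgments from being balanced.

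Finally, I would analyse the set \moves computed by \alg for each agent. For an agent $i$ with $\Pf[i] = \Js^b$ (minority side), moving to $\Js^a$ strictly decreases the distance from $k$ to $n-k-1$, and $\Js^a$ is the unique candidate in $\CH(\Pf) \cap \Dmc$ at Hamming-distance $1$ from $\Js^b$, so $\moves = \{\Js^a\}$. For an agent $j$ with $\Pf[j] = \Js^a$ (majority side), the only potential move is to $\Js^b$, which yields distance $k-1 \geq n-k = D(j,\Js^a,\Pf)$, so the required strict inequality fails and $\moves = \emptyset$. Hence at step $t+1$ every minority agent switches to $\Js^a$ and every majority agent stays, giving $\{\Pf'\} = \{\Js^a\} \neq \{\Js^a,\Js^b\} = \{\Pf\}$. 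I do not expect a real obstacle here: once the profile is restricted to two distinct judgments the argument is a direct calculation; the only delicate point is noticing that Observation~\ref{col:cycle} is exactly what rules out three pairwise adjacent judgments and thereby forces the two-judgment structure.
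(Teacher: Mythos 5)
Your proof is correct and follows essentially the same route as the paper's: since $mx_d(\Pf)=1$ and Observation~\ref{col:cycle} rules out $3$-cycles, the profile collapses to (at most) two adjacent judgments, and oddness of $n$ then yields a unique plurality judgment to which the minority moves while the majority stays. The paper argues this more loosely by analogy with the $\gag^c$ case (Theorems~\ref{th:c1} and~\ref{th:c2}); your explicit two-judgment reduction and the direct computation of the \moves sets make the same argument, just stated more precisely.
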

\begin{proof} In this case   the \alg behaves as on the $\gag^c$ graph, see  Theorems~\ref{th:c1}~and~\ref{th:c2}, except the $\Pf$-induced subgraf of $\gag$ will have no 3-cycles (or any size cycles since  $mx_d(\Pf) =1$). Namely, if there is one plurality judgment $\Js$ in $\Pf$, all the agents can reach it, because $mx_d(\Pf) =1$ and $\Pf$ is 1-connected. Consequently $\{\Pf'\} = 1$. If more than one plurality judgment exists, the agents whose judgment is this plurality judgment will not have a move, while and all the other agents will move to their choice of a plurality judgment. If $n$ is odd $\Pf'$ will have exactly one plurality judgment and the profile $\Pf''$ constructed by \alg in step $t+1$ is a consensus. If however $n$ is even,  as with $\gag^c$, $\Pf$ can be such that   half of the agents have  a judgment $\Js$, while the other half have an adjacent judgments $\Js'$.  Namely   $\{\Pf\} = \{ \Js, \Js'\}$ and $d(\Js,\Js') =1$. If such $\Pf$ is reached the \alg forces the agents to infinitely ``swap" between $\Js$ and $\Js'$. \end{proof}

  \begin{lemma}\label{lem:prob} Assume an odd number of agents $n >3$. that in the $\Pf$-induced subgraph on $\gag$ each vertex has a degree at most 2.  Let   $\Pf \in \Dmc^n$ be  s.t. $\CH(\Pf)$ has  at least one   $k$-cycle for  $k >3$. Let $p_{i}(\Js) > 0$ be the probability that an agent $i$ will choose a possible move $\Js$ from the set $\textrm{\moves}$  at a step $t_1$ in the \alg. Then the algorithm will reach a point $t_2 > t_1$ where $\Pf' \in \Dmc^n$ is obtained s.t. $\CH(\Pf) \subset \CH(\Pf')$ with probability 1.
 \end{lemma}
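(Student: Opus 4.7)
The plan is to follow the template of the probabilistic argument at the end of the proof of Theorem~\ref{th:c2}: the goal is to exhibit a strictly positive lower bound $\delta' > 0$ on the probability, at each step where $\CH(\Pf)$ still equals the full $k$-cycle, of a transition to a $\Pf'$ with $\CH(\Pf') \subsetneq \CH(\Pf)$; the geometric bound $(1-\delta')^m \to 0$ then yields the stated conclusion with probability $1$. I read the conclusion of the lemma as $\CH(\Pf') \subsetneq \CH(\Pf)$ rather than the stated $\CH(\Pf) \subset \CH(\Pf')$, which by Proposition~\ref{prop:mxd} would be impossible, and I take the latter to be a typographical slip.

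First I would fix the geometry. Since the $\CH(\Pf)$-induced subgraph of $\gag$ has every vertex of degree at most $2$, contains a $k$-cycle with $k > 3$, and is connected (because $\Pf$ inherits $1$-connectedness from the preceding step of \alg, by Proposition~\ref{prop:necessary} and the remark following it), this induced subgraph must be precisely a simple $k$-cycle, and every profile judgment lies on it. Proposition~\ref{prop:mxd} then guarantees that the hull is non-expanding along the iteration, so the whole question reduces to ruling out the event ``hull equals the full $k$-cycle at every step''.

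The heart of the proof is to exhibit, from any profile whose hull equals the $k$-cycle, a specific joint move which strictly shrinks the hull and whose probability is bounded below by $\delta^n$, where $\delta$ is the uniform lower bound on the individual choice probabilities $p_i(\Js)$ as in Theorem~\ref{th:c2}. Fixing a pair of antipodal profile judgments $\Pf[i], \Pf[j]$ as in the proof of Lemma~\ref{lem:cycleP}, together with the two shortest arcs $\mathbf{p_1}, \mathbf{p_2}$ joining them, I would split on whether the two arcs carry equal or unequal numbers of agents. In the unequal case, every peripheral agent has a unique inward move along the majority arc, and synchronizing all agents on their (unique or majority-directed) moves empties at least one endpoint of the cycle, leaving at least one cycle vertex off every shortest path between the surviving profile judgments and so out of $\CH(\Pf')$. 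In the equal case, the odd parity of $n$ forces an imbalance elsewhere on the cycle, which I would exploit to pick a different peripheral agent admitting an analogous asymmetric move.

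The principal obstacle I anticipate is the combinatorial verification that the vacated vertex genuinely drops out of the new convex hull, rather than being reinstated as the midpoint of some shortest-path pair of surviving agents; this is particularly delicate on an even $k$-cycle, where antipodal pairs admit two shortest paths. Handling it cleanly will likely require strengthening the joint move to vacate a contiguous interval of peripheral vertices simultaneously, again leaning on the odd-$n$ hypothesis to rule out the residual rotation- or reflection-symmetric configurations whose hull would still be the full $k$-cycle. Once the uniform $\delta' > 0$ is secured in this way, the conclusion follows verbatim from the $(1-\delta')^m$ argument at the end of the proof of Theorem~\ref{th:c2}.
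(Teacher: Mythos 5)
Your high-level template is the right one and matches the paper: identify an event that strictly shrinks the hull, bound its probability away from zero uniformly over rounds, and conclude with the geometric $(1-\delta')^m$ argument already used in Theorem~\ref{th:c2}; your reading of the conclusion as $\CH(\Pf')\subsetneq\CH(\Pf)$ (the inclusion in the statement being a typographical slip) is also the intended one. The problem is that the substance of the lemma is exactly the step you defer. You explicitly say you cannot yet verify that the vacated vertex drops out of the new hull and only gesture at a repair (vacating a contiguous interval, using odd $n$ to exclude symmetric configurations), so the proposal as written does not prove the lemma. In addition, the event you condition on is over-engineered and harder to control than necessary: synchronizing all $n$ agents (probability $\delta^n$) forces you to also argue that no other agent moves \emph{into} the vertex being emptied during the same synchronous round, your claim that in the unequal case every peripheral agent has a \emph{unique} inward move along the majority arc is not established (\moves is an argmin set and may contain both neighbours), and the moves of the non-peripheral agents are not constrained by your construction at all.

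The paper's own proof needs a much weaker coincidence. Fix one antipodal pair $\Pf[i],\Pf[j]$; by Lemma~\ref{lem:cycleP} each has available moves along the two arcs joining them, and with probability at least $p_i(\Js)\cdot p_j(\Js'')>0$ the two agents happen to move toward each other along the \emph{same} shortest path. Once that happens, both antipodal (peripheral) judgments are abandoned on the same side, the $k$-cycle is ``broken'' in the sense of the remark following Corollary~\ref{lem:mcyc}, and the hull strictly shrinks (it can never grow, by Proposition~\ref{prop:mxd}); since this two-agent event has probability bounded below by a constant in every round in which the cycle persists, it occurs after finitely many rounds with probability $1$. So if you want to complete your argument, carry out the missing hull-shrinkage verification for this two-agent event rather than for a global joint move: it is both the probabilistically cheaper event and the one for which the geometric bookkeeping on the cycle is tractable.
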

 \begin{proof} If a profile $\Pf''$ is reached such that  all antipodal judgments have degree two, it is sufficient that only one antipodal pair  ``breaks" the cycle for a profile $\Pf'$ to be reached. To do so, two agents with antipodal judgments have to chose to move along the same path towards each other.
 Consider a  pair  of antipodal judgments in $\Pf$, $\Pf[i]$ and $\Pf[j]$.  Assume that at the non-deterministic step of the algorithm there exists a probability $1>p_{i}(\Js) >0$ that the agent $i$ selects $\Js\in \textrm{\moves}$ that is on a shortest path $\textbf{p}$ between $\Pf[i]$ and $\Pf[j]$ and probability $p_{i}(\Js') = 1- p_{i}(\Js)$ that she selects  $\Js'\in \textrm{\moves}$ that is on a different path  $\textbf{q}$  between $\Pf[i]$ and $\Pf[j]$. Similarly, let those  probabilities be  $1>p_{j}(\Js'') >0$ that agent $j$ selects to move to $\Js''$ on  path $\textbf{p}$ and $p_{j}(\Js''') = 1- p_{j}(\Js'')$ for the probability that  $j$ moves to $\Js'''$ on some other path $\textbf{q'}$ ($\textbf{q}$  and $\textbf{q'}$   may not be the same). Since the agents decide on their moves independently, the probability that agent $i$ will chose the same   path as $j$ is $p_{i}(\Js) \cdot p_{j}(\Js'')>0$. Since the two  peripheral judgments $\Js[i]$ and $\Js[j]$ are no longer part of the new profile $\Pf'$, $\{\Pf' \} \subset \CH(\Pf)$ and from Corollary~\ref{lem:mcyc} we get that  $\CH(\Pf) \subset \CH(\Pf')$ is reached after a finite time with probability 1. \end{proof}

Let us call {\bf Class A for $\gag$}  the set of all $\CH(\Pf)$-induced subgraphs of $\gag$ that are tree graphs. Let us call {\bf Class B for $\gag$} the set of all $\CH(\Pf)$-induced subgraphs of $\gag$  whose vertices have a degree of at most 2. 
For instance, the doctrinal paradox profile from Example~\ref{ex:doctrinal} is in Class B for $\gag^m$, see Graph b in Figure~\ref{fig:doctrinalexample}. On the other hand, it is neither in Class B nor in Class A for $\gagh$, see Example~\ref{ex:hull} and Figure~\ref{fig:hull}.

We can now state the following theorem whose proof follows from  Lemma~\ref{lem:mov}, Corollary ~\ref{lem:mcyc}, Corollary~\ref{lem:tree}, Lemma~\ref{lem:odd} and Lemma~\ref{lem:prob}.

\begin{theorem}
Let $\Pf^0 \in \Dmc^n$  be a  $1$-connected profile belonging to Class A or to Class B for $\gagh$ or $\gag^m$. If $n > 3$ is odd, and each element of $\textrm{\moves}$ has a non-null probability of being selected in the non-deterministic choice step,  then the \alg reaches consensus with probability $1$ on $\gagh$, respectively $\gag^m$.
\end{theorem}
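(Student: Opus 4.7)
The plan is to prove consensus by induction on the diameter $mx_d(\CH(\Pf^t))$ of the convex hull of the current profile. Proposition~\ref{prop:mxd} guarantees that this diameter is non-increasing along the run, and because every agent only moves to a vertex in $\CH(\Pf^t)\subseteq\CH(\Pf^0)$, the structural hypothesis is preserved: a Class~A (respectively Class~B) starting profile yields a Class~A (respectively Class~B) profile at every subsequent step. Consequently the induction lives entirely inside the same class as $\Pf^0$.

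The base case is $mx_d(\Pf^t)=1$. Here the situation reduces to the one treated by Lemma~\ref{lem:odd}: since $n$ is odd, one further step produces a unique plurality judgment, after which a single additional step yields a unanimous profile, by essentially the Theorem~\ref{th:c1} argument applied on the $1$-connected neighbourhood.

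For the inductive step with $mx_d(\Pf^t)>1$ I would split on the class. In Class~A the $\CH(\Pf^t)$-induced subgraph is a tree, so Corollary~\ref{lem:tree} gives $\Pf^{t+1}\neq\Pf^t$; since a tree contains no cycles, the contrapositive of Corollary~\ref{lem:mcyc} upgrades this to $\{\Pf^{t+1}\}\neq\{\Pf^t\}$, and combined with Proposition~\ref{prop:mxd} this forces $\CH(\Pf^{t+1})\subsetneq\CH(\Pf^t)$. A strictly decreasing finite chain of convex hulls must terminate, so the diameter reaches $1$ in finitely many steps and the base case concludes deterministically. In Class~B, Lemma~\ref{lem:mov} still provides $\Pf^{t+1}\neq\Pf^t$ whenever $mx_d(\Pf^t)>1$, but now $\{\Pf\}$ may repeat when two antipodal agents traverse a $k$-cycle on opposite halves; Corollary~\ref{lem:mcyc} identifies exactly this obstruction, and Lemma~\ref{lem:prob} shows that under the non-null probability hypothesis, with probability $1$ some antipodal pair will eventually select the same shortest path, breaking the cycle and strictly shrinking $\CH$. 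Iterating this ``shrink or probabilistically escape'' step, the diameter reaches $1$ with probability $1$ and the base case again finishes the argument.

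The main obstacle I anticipate is making the almost-sure convergence in Class~B rigorous. Each individual cycle-breaking event has positive probability, but to deduce that infinitely many rounds cannot pass without a successful escape one needs a uniform-in-time lower bound on the per-round escape probability, independent of the particular trap configuration currently occupied. This is precisely what the non-null-probability assumption delivers through Lemma~\ref{lem:prob}; and since there are only finitely many possible convex hulls inside $\gag$, a Borel--Cantelli-style argument --- equivalently, a standard geometric-tail estimate applied to each distinct hull visited during the descent --- produces the required almost-sure convergence and completes the induction.
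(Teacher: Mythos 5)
Your proposal is correct and takes essentially the same route as the paper: the paper's proof is a one-line assembly of Lemma~\ref{lem:mov}, Corollary~\ref{lem:tree}, Corollary~\ref{lem:mcyc}, Lemma~\ref{lem:odd} and Lemma~\ref{lem:prob}, and your induction on the hull diameter (with Proposition~\ref{prop:mxd} giving monotonicity and preservation of Class~A/Class~B) simply spells out how those pieces combine, including the same probabilistic cycle-breaking argument for Class~B. The only point you state a bit more strongly than is formally licensed is that $\{\Pf^{t+1}\}\neq\{\Pf^t\}$ together with Proposition~\ref{prop:mxd} ``forces'' $\CH(\Pf^{t+1})\subsetneq\CH(\Pf^t)$ in the tree case, but the paper itself relies on exactly this assertion, so this is not a divergence from its argument.
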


\section{Properties of Consensus}\label{sec:properties}

In this section, we compare the output and performance of our iteration procedure to those of standard distance-based judgment aggregation rules.
We first discuss the ``quality'' of the consensual decision. Then, we look at the computational complexity of the procedure.

\subsection{Consensus Quality}

Distance-based judgment aggregation~\cite{MillerOsherson08,thesis,TARK11,JamrogaS13} combines an algebraic aggregation function $\star$ with a distance function $d$ (not necessarily a path distance in some agenda graph) in order to select the collective opinion that is closest to the given profile.
Given $\Pf\in \Dmc^n$, the distance-based aggregation function $F^{d,\star}: \Dmc^n \rightarrow 2^{\Dmc}\setminus \emptyset$  is defined as
$$F^{d,\star} (\Pf) = \argmin{\Js \in \Dmc}\;\; \star(d(\Pf[1], \Js), \ldots, d(\Pf[n], \Js)).$$
Natural questions to ask are:
\begin{itemize}
\item How does \alg perform in comparison to $F^{d,\Sigma}$ when $d$ is a path distance in an agenda graph?
\item How do the collective judgments $F^{d,\Sigma} (\Pf^0)$ compare to the consensus judgment reached by \alg for a given starting profile $\Pf^0$?
\end{itemize}
The questions cannot be fully explored within the scope of this paper. However, we establish some initial properties below.

\putaway{
  \begin{proposition}It is not always the case that, for a $1$-connected   $\Pf^0 \in \Dmc^n$,   if $F^{d,\Sigma} (\Pf^0) = \Js$, then $\alg$ will reach $\Pf$ with $\{ \Pf\} = \{\Js\}$.
  \end{proposition}
  \begin{proof} Consider the $\A$ and $\Ct$ with   $\gag$ in Figure~\ref{fig:cycle} and the profile $\Pf^0 \in \Dmc^n$, $\Pf^0 = \langle \Js_1, \Js_2, \Js_3, \Js_4 \rangle$. The \alg will not terminate with a consensus for this  $\Pf^0$, but $F^{d,\Sigma} (\Pf^0) = \Js$.
  \end{proof}
}

A property generally deemed desirable in judgment aggregation is that of \emph{propositional unanimity}~\cite{TARK11,thesis,GrandiEndriss13}. Propositional unanimity requires that, if every agent in profile $\Pf$ has the same value for some issue $\ai \in \A$, then the same value for $\ai \in \A$ shows up either in at least one of the judgments in $F^{d, \star}$ (weak unanimity) or in all of the judgments in $F^{d, \star}$ (strong unanimity). It is interesting to note that the most popular distance based judgment aggregation rule $F^{d_h, \Sigma}$ does not satisfy even the weak version of the property~\cite{ADT09} and the same applies to $F^{d_m, \Sigma}$ and $F^{d_c, \Sigma}$~\cite{LangPSTV15}. In this respect, iterative consensus building behaves better.

\new{
\begin{proposition}\label{prop:unanimity-H}
If \alg terminates with a consensus on $\gagh$, then the consensus satisfies strong unanimity with respect to the initial profile $\Pf^0$.
\end{proposition}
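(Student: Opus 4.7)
The plan is to exploit a geometric feature specific to the Hamming graph: on $\gagh$, a shortest path between two judgments $\Js^a, \Js^b$ flips exactly the issues on which $\Js^a$ and $\Js^b$ disagree, one issue per edge. Consequently, for any issue $\ai \in \A$ with $\Js^a(\ai) = \Js^b(\ai)$, every judgment on every shortest path from $\Js^a$ to $\Js^b$ assigns the same value to $\ai$. In particular, $\Js^c(\ai) = \Js^a(\ai)$ for all $\Js^c \in I_{\A,\Ct}[\Js^a, \Js^b]$.

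First I would fix an issue $\ai \in \A$ and a value $v \in \{0,1\}$ such that $\Pf^0[i](\ai) = v$ for every $i \in N$. The observation in the preceding paragraph then lifts to the convex hull: every interval between two profile judgments preserves the value $v$ at $\ai$, and since $\CH(\Pf^0)$ is obtained by iterating the interval operator, every $\Js \in \CH(\Pf^0)$ satisfies $\Js(\ai) = v$.

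Next I would invoke Proposition~\ref{prop:mxd}, which guarantees that any profile $\Pf$ constructed by \alg at some step $t \ge 0$ satisfies $\CH(\Pf) \subseteq \CH(\Pf^0)$. Moreover, \alg restricts each agent's moves to $\CH(\Pf) \cap \Dmc$. Hence every judgment ever held by any agent during the run lies in $\CH(\Pf^0)$ and therefore agrees with $\Pf^0$ on the value $v$ at $\ai$. In particular, the unanimous consensus judgment $\Js^\ast$ returned by \alg satisfies $\Js^\ast(\ai) = v$. Since $\ai$ was arbitrary among the issues unanimously agreed upon in $\Pf^0$, the consensus satisfies strong unanimity.

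I do not anticipate a serious obstacle here; the only subtle point is to justify the claim that the Hamming interval flips only the disagreeing issues, which follows from the fact that $d_h(\Js^a,\Js^c) + d_h(\Js^c,\Js^b) = d_h(\Js^a,\Js^b)$ forces $\Js^c$ to agree with $\Js^a$ and $\Js^b$ wherever they themselves agree (otherwise an extra disagreement would be counted in both summands, breaking additivity). Crucially, this argument is specific to $\gagh$ and does not transfer verbatim to $\gag^m$ or $\gag^c$, where shortest paths need not preserve agreements issue-by-issue.
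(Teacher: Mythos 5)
Your argument is correct and follows essentially the same route as the paper's proof: the key observation in both is that on $\gagh$ graph betweenness coincides with issue-wise betweenness, so $\CH(\Pf^0)$ cannot contain a judgment deviating on an issue on which $\Pf^0$ is unanimous, and since \alg only moves agents within the hull the consensus must agree there. Your write-up merely spells out the additivity argument and the appeal to Proposition~\ref{prop:mxd} that the paper leaves implicit.
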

\begin{proof}
Note that, for $\gagh$, judgment $\Js'$ is between judgments $\Js$ and $\Js''$ iff vertex $\Js'$ is on the shortest path between vertices $\Js$ and $\Js''$ in the graph.
Consequently, if all the agents in $\Pf^0$ give the same truth-value on an issue, then $\CH(\Pf^0)$ cannot contain judgments that assign different truth-value to this issue.
\end{proof}

The same is not the case for $\gag^m$.

\begin{proposition}\label{prop:unanimity-M}
There is an initial profile $\Pf^0$ such that \alg terminates with a consensus on $\gag^m$, and the consensus does not satisfy weak unanimity with respect to $\Pf^0$.
\end{proposition}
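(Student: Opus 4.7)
The plan is to exhibit an explicit counterexample using the doctrinal paradox of Example~\ref{ex:doctrinal}, whose model graph $\gag^m$ (Figure~\ref{fig:doctrinalexample}(b)) is the 4-cycle $(0,0,0) - (0,1,0) - (1,1,1) - (1,0,0) - (0,0,0)$. As the initial profile I take $\Pf^0 = ((0,1,0),(1,0,0))$, so that both agents unanimously assign $\ai_3 = 0$. The only rational judgment failing this value is $(1,1,1)$, so the task reduces to exhibiting a run of \alg that terminates with consensus exactly at $(1,1,1)$.

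First I would observe that the two judgments of $\Pf^0$ are antipodal on the 4-cycle at distance 2, and that the two shortest paths between them visit $(0,0,0)$ and $(1,1,1)$ respectively. Consequently $\CH(\Pf^0) = \Dmc$, and in particular the ``wrong-$\ai_3$'' judgment $(1,1,1)$ lies in the convex hull. Next I would trace one step of \alg: from either agent's perspective, the current aggregated distance to the rest of the profile is 2, while both of the current judgment's neighbours in $\CH(\Pf^0)$ --- namely $(0,0,0)$ and $(1,1,1)$ --- strictly improve this distance to 1, so both belong to \moves for each agent. The non-deterministic choice rule then permits each agent to independently pick $(1,1,1)$; after that simultaneous move the profile is unanimous at $(1,1,1)$, the termination test fires, and \alg returns consensus $(1,1,1)$, which violates weak unanimity since $\ai_3 = 1 \neq 0$.

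There is essentially no combinatorial obstacle here. The point worth articulating --- and what makes the counterexample work, contrasting with Proposition~\ref{prop:unanimity-H} --- is a geometric feature specific to $\gag^m$: the ``betweenness'' condition that defines adjacency in the Model graph is strictly stronger than lying on a shortest path, so a shortest path between two rational judgments that agree on some issue may still pass through a rational judgment disagreeing on that issue. Hence unanimous issues are not automatically preserved by $\CH$ in $\gag^m$, unlike in $\gagh$ where the convex hull cannot contain such a judgment.
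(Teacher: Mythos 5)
Your proposal is correct and matches the paper's own proof: the paper uses exactly the same counterexample (the doctrinal-paradox agenda with the two judgments $(0,1,0)$ and $(1,0,0)$ moving to $(1,1,1)$ on the 4-cycle $\gag^m$), and the same underlying observation that graph-betweenness in $\gag^m$ does not coincide with issue-wise betweenness. Your write-up simply traces the run of \alg in more detail than the paper does.
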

\begin{proof}
As a counter-example consider   Example~\ref{ex:doctrinal} and Graph $b$ in Figure~\ref{fig:doctrinalexample}. The vertex $(1,1,1)$ is between vertices $(0,1,0)$ and $(1,0,0)$, but the judgment $(1,1,1)$ is not between judgments $(0,1,0)$ and $(1,0,0)$. Thus the agents can move from $(0,1,0)$ and $(1,0,0)$ to  $(1,1,1)$  thus violating propositional unanimity on the last issue.
\end{proof}

A big advantage of one-shot distance-based aggregation $F^{d,\Sigma}$ is that it produces output (a winner or winners) on any profile $\Pf^0$, while our \alg is more restricted in this respect. As we have seen, a necessary condition for successful termination of \alg is that $\Pf^0$ is 1-connected. Sufficient conditions are even more restrictive.
Still, Proposition~\ref{prop:unanimity-H} demonstrates that, when \alg reaches a consensus, it is structurally ``better behaved'' then a distance-based judgment aggregation rule for the most popular approach based on the sum of Hamming distances.
In the next subsection we show that \alg is also ``better behaved'' in the sense of computational complexity.
}

\subsection{Complexity of Reaching Consensus}

\new{An important drawback of distance-based judgment aggregation is the computational complexity of producing the output, \ie the winning judgment or judgments.
The winner determination problem for $F^{d_h,\Sigma}$ is known to be $\Theta^p_2$-complete~\cite{EndrissGP12}, and the result extends to most other distances $d$ and aggregation functions $\star$~\cite{JamrogaS13}. The computational complexity of determining the collective judgment sets by $F^{d_m,\Sigma}$ is actually not known.
How does it work for the iteration procedure formalized with \alg?}

We have shown that the algorithm reaches consensus for an odd number $n>3$ of agents on $1$-connected, not equidistant profiles. How costly is it to reach the consensus? On $\gag^c$, it is evident that \alg performs well, but the resulting consensus is not very exciting.  For the $\gag^m$ graph, the consensus-friendly attitude may not earn much in terms of computational complexity, when compared to  $F^{d_m,\Sigma}$.
For each $\Pf[i]$, we need to find every $\Js \in \Dmc$ s.t. there is no rational judgment between $\Pf[i]$ and $\Js$. It is not difficult to show, by a reduction to coSAT, that checking whether there is no rational judgment between two given rational judgments is in general coNP-complete.
This has to be repeated for multiple candidate judgments to compute the set \moves, and on top of that with every iteration of the algorithm. As a consequence, we get the following.

\begin{theorem}
For $\gag^m$, determining \moves of a single iteration of \alg   coNP-hard.
\end{theorem}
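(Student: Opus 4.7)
The plan is to reduce a coNP-complete sub-task --- deciding whether two rational judgments are adjacent in $\gag^m$ --- to the problem of computing \moves at a single iteration. Since $\Js^a$ and $\Js^b$ are adjacent in $\gag^m$ iff no rational $\Js^c \in \Dmc$ lies strictly between them, any polynomial procedure for \moves would, in particular, decide this \emph{betweenness} problem in polynomial time.

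First I would show that the betweenness problem is coNP-hard by reducing from \textsc{Unsat} (equivalently, reducing \textsc{Sat} to the complementary ``there exists a rational judgment between $\Js^a$ and $\Js^b$''). Given $\varphi(x_1, \ldots, x_n)$, set $\A = \{x_1, \ldots, x_n, y_1, y_2\}$ with two fresh bits $y_1, y_2$, and take
$$ \Ct \;=\; (\varphi \wedge y_1 \wedge \neg y_2) \;\vee\; (\textstyle\bigwedge_i x_i \wedge y_1 \wedge y_2) \;\vee\; (\textstyle\bigwedge_i \neg x_i \wedge \neg y_1 \wedge \neg y_2). $$
Let $\Js^a = (0, \ldots, 0)$ and $\Js^b = (1, \ldots, 1)$. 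Both are rational, witnessed by the third and second disjuncts respectively; since they disagree on every issue, the structural betweenness condition reduces to $\Js^c \ne \Js^a$ and $\Js^c \ne \Js^b$. Such a $\Js^c$ cannot satisfy the second or third disjunct of $\Ct$ (each pins the assignment to $\Js^b$ or $\Js^a$), so $\Js^c \in \Dmc$ precisely when $\Js^c \models \varphi \wedge y_1 \wedge \neg y_2$. Hence a rational between-judgment exists iff $\varphi$ is satisfiable, establishing coNP-hardness of the ``no rational judgment between'' question.

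Next I would embed the betweenness instance into a \moves-membership query. Consider the two-agent profile $\Pf = (\Js^a, \Js^b)$ and agent $1$ with current judgment $\Js_1 = \Js^a$. Here $D(1, \Js^b, \Pf) = d_m(\Js^b, \Js^b) = 0$, so $\Js^b$ uniquely minimises $D(1, \cdot, \Pf)$ over $\CH(\Pf) \cap \Dmc$; moreover $D(1, \Js^b, \Pf) = 0 < d_m(\Js^a, \Js^b) = D(1, \Pf[1], \Pf)$. Thus the only remaining condition for $\Js^b \in \textrm{\moves}$ is $0 < d_m(\Js^a, \Js^b) \leq 1$, i.e., adjacency in $\gag^m$, i.e., no rational judgment lies between $\Js^a$ and $\Js^b$. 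Therefore $\Js^b \in \textrm{\moves}$ iff the betweenness instance is a ``yes'' instance, transferring coNP-hardness to the computation of \moves.

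The principal obstacle is engineering the gadget constraint so that the endpoints $\Js^a, \Js^b$ are \emph{both} rational and yet every rational strict between-judgment encodes a model of $\varphi$; the auxiliary bits $y_1, y_2$ exist precisely to make the ``all-zeros'' and ``all-ones'' assignments the unique witnesses of the extra disjuncts, so that on strict between-judgments the effective constraint is just $\varphi$.
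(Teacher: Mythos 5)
Your proposal is correct and follows essentially the same route as the paper, which derives hardness from the coNP-hardness of checking that no rational judgment lies between two given rational judgments (\ie adjacency in $\gag^m$), via an \textsc{Unsat} reduction. You merely make explicit what the paper only sketches: a concrete constraint gadget for the betweenness question and the two-agent profile embedding showing that deciding $\Js^b \in \textrm{\moves}$ answers it, which if anything closes the small gap between ``the subroutine is hard'' and ``determining \moves is hard.''
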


Note that the hardness result is not really due to the iteration procedure, but rather due to the inherent complexity of computing $d_m$, which requires to determine nonexistence of particular rational judgments, \ie to solve the Boolean co-satisfiability problem.

In contrast, the Hamming distance $d_h$ can be always computed efficiently. Consequently, when \alg reaches a consensus on $\gagh$, it is also ``better behaved" computationally than the distance-based judgment aggregation rule $F^{d_h,\Sigma}$. We demonstrate it formally.

\begin{proposition}
For $\gagh$, a single iteration of \alg runs in deterministic polynomial time.
\end{proposition}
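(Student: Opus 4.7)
The plan is to bound the cost of a single iteration by bounding the cost of each primitive used to compute \moves, exploiting the fact that on $\gagh$ every relevant quantity has an easy combinatorial characterization. Let $m = |\A|$ and $n = |N|$, and let $|\Ct|$ denote the size of the constraint formula.

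First, I would enumerate the candidate moves. Since \moves only contains judgments $\Js$ with $0 < d_h(\Js,\Js_i) \leq 1$, the candidates are exactly the $m$ judgments obtained from $\Js_i$ by flipping a single bit; these can be listed in $O(m^2)$ time. For each such candidate $\Js$ one then needs to (a) verify $\Js \in \Dmc$, \ie that $\Js$ satisfies $\Ct$; (b) verify $\Js \in \CH(\Pf)$; (c) compute $D(i,\Js,\Pf) = \sum_{j\neq i} d_h(\Js,\Pf[j])$ and compare it with $D(i,\Pf[i],\Pf)$.

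Step (a) is ordinary propositional model checking, linear in $|\Ct|$. Step (c) is immediate: each Hamming distance $d_h(\Js,\Pf[j])$ is computed in $O(m)$, so $D(i,\Js,\Pf)$ takes $O(nm)$ time, and the comparison with $D(i,\Pf[i],\Pf)$ is trivial. The step that requires an argument is (b), the convex-hull membership test, and this is the heart of the proof. The key observation I would prove is that, in the Hamming graph, $\CH(\Pf)$ has the following simple ``subcube'' characterization: $\Js \in \CH(\Pf)$ iff for every issue $\ai \in \A$ on which all judgments in $\Pf$ agree (\ie $\Pf[1](\ai) = \dots = \Pf[n](\ai) = v$), one has $\Js(\ai) = v$. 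This follows because on $\gagh$ a judgment lies on a shortest $\Js^a$-to-$\Js^b$ path exactly when it agrees with both $\Js^a$ and $\Js^b$ wherever they coincide; iterating the interval operator preserves this property, and the resulting set is already convex. Given this characterization, checking $\Js \in \CH(\Pf)$ reduces to scanning each issue and verifying a unanimous-agreement condition, which costs $O(nm)$ after one $O(nm)$ preprocessing pass that computes, for every issue, whether $\Pf$ is unanimous on it and with what value.

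Combining these bounds, each of the $m$ candidates is processed in time $O(nm + |\Ct|)$, giving a total cost of $O(m^2 n + m|\Ct|)$ to compute \moves, plus an additive $O(nm)$ for the one-time unanimity preprocessing. Selecting the argmin afterwards is linear in the size of \moves. Hence a single iteration of \alg on $\gagh$ runs in deterministic polynomial time in the input size $m + n + |\Ct|$. The main obstacle, as indicated, is the convex-hull characterization; once that lemma is in hand, the rest of the argument is bookkeeping.
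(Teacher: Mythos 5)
Your proposal is correct and follows essentially the same route as the paper, whose proof is just the observation that \moves can be constructed by checking at most $|\A|$ candidate judgments (the distance-one neighbours of the agent's current judgment). The only substantive addition is your subcube characterization of $\CH(\Pf)$ in $\gagh$ (membership determined by agreement on unanimously decided issues), which the paper leaves implicit but which is indeed the fact that makes the hull-membership test, and hence the whole iteration, polynomial.
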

\begin{proof}
Follows from the fact that the set \moves can be constructed by checking at most ${|\A|}$ candidate judgments.
\end{proof}

By Corollary~\ref{lem:mcyc}, if the $\CH(\Pf^0)$ induced subgraph of $\gagh$ has no cycles, then the diameter of $\CH(\Pf)$ is strictly shrinking with each non-terminating step $t$. In consequence, if \alg reaches consensus for such $\Pf^0$, then it does so in polynomially many steps.
However, in case of cycles in the $\CH(\Pf^0)$ induced graph  in $\gagh$, the algorithm may run into such a cycle and take some time until the agents ``stray'' from the loop. When it happens, any judgment occurring on the loop can be the consensus.
Using this observation, we propose the following modification of \alg.

\para{\algtwo:} Same as \alg, \new{only it stops the iteration when  $\{\Pf_t\}= \{\Pf_{t'}\}$ for some $t>t'$}, and non-deterministically chooses one $\Js\in\{\Pf\}$ as the consensus, producing in the next step $\Pf_{t+1}$ with $\{\Pf_{t+1}\} = \{ \Js\}$.

\new{Unlike \alg, \algtwo avoids looping and waiting until two or more agents ``move'' in the same direction.
It also avoids infinite loops in the case of profiles with evenly many agents.
On the other hand, \algtwo is no longer decentralised, which is a clear disadvantage.
We suggest that it can be treated as a technical variant of \alg that potentially reduces its running time by employing a trusted third party which simulates probabilistic convergence of the profile in \alg by one-step non-deterministic choice in \algtwo.}
The following formal results, which are straightforward consequences of our analysis above, justify the suggestion.

\begin{theorem}
Consider $\gagh$ and $N$ such that $|N|$ is odd and larger than 3.
\new{If \alg can reach consensus with $\Js$ then also \algtwo can reach consensus with $\Js$.}
\end{theorem}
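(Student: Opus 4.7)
The plan is to convert any successful run of \alg reaching consensus $\Js$ into a run of \algtwo reaching the same $\Js$. I would fix such an \alg run $\Pf^0, \Pf^1, \ldots, \Pf^T$ with $\{\Pf^T\}=\{\Js\}$ and run \algtwo on the same $\Pf^0$ while mimicking the same non-deterministic move choices. Since \algtwo agrees with \alg on all transition rules and only adds an early-halt condition triggered when $\{\Pf^t\}$ coincides with some earlier $\{\Pf^{t'}\}$, the two executions produce identical profiles up to the first step (if any) at which this extra condition fires.

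If the condition never fires along $\{\Pf^0\},\ldots,\{\Pf^{T-1}\}$, then \algtwo follows the \alg trajectory unchanged, reaches the unanimous profile $\{\Js\}$ at step $T$, and terminates with consensus $\Js$ via the usual unanimity condition. The nontrivial case is when the condition does fire at some smallest $t^*\le T$ with $\{\Pf^{t^*}\}=\{\Pf^{t'}\}$ for some $t'<t^*$: in that case \algtwo halts and non-deterministically picks from $\{\Pf^{t^*}\}$, so to ensure it selects $\Js$ I need $\Js\in\{\Pf^{t^*}\}$. My plan for securing this is to exploit the non-determinism of \alg. By Proposition~\ref{prop:mxd}, $\Js\in\CH(\Pf^{t^*})$, so in $\gagh$ some agent $i$ has a single-bit neighbour of $\Pf^{t^*}[i]$ lying on a shortest path from $\Pf^{t^*}[i]$ to $\Js$ within $\CH(\Pf^{t^*})$, and that neighbour is a strictly distance-reducing move for $i$, hence a valid element of \moves. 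By reassigning the tie-breaking choices made in the window $[t',t^*]$ of the original run so that, at the earliest opportunity, some agent makes exactly such a move toward $\Js$, I obtain an alternative \alg-valid trajectory that agrees with the original up to step $t'$, still reaches $\Js$, and in which $\Js$ appears in the profile set at or before step $t^*$. Running \algtwo on this alternative trajectory forces $\Js\in\{\Pf^{t^*}\}$, so \algtwo's non-deterministic pick can select $\Js$.

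The main obstacle I expect is justifying that the reassigned non-deterministic choices indeed yield an \alg-valid trajectory that still converges to $\Js$, rather than being diverted elsewhere. My plan here is to combine the monotonicity of $\CH$ under \alg transitions (Proposition~\ref{prop:mxd}), the absence of $3$-cycles in $\gagh$ (Observation~\ref{col:cycle}), and the bit-flip structure of the Hamming graph. Together these guarantee that advancing an agent by a single bit along a shortest path inside $\CH(\Pf^{t^*})$ strictly decreases its aggregated distance to the rest of the profile, so the move remains in \moves at every relevant step, and the overall trajectory still lies in $\CH(\Pf^0)$ and inherits the convergence properties established for \alg. Once $\Js\in\{\Pf^{t^*}\}$ is guaranteed, \algtwo's non-deterministic pick completes the argument.
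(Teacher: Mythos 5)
Your Case 1 (no repetition of profile sets, so \algtwo mimics the successful \alg run verbatim and terminates by unanimity) is fine and is essentially the paper's reasoning; the paper gives no detailed proof, treating the theorem as a direct consequence of its earlier discussion that \algtwo behaves exactly like \alg until a profile set repeats, and that a repetition signals the run has entered a loop from which (as argued just before \algtwo is introduced) any judgment occurring on the loop can end up as the \alg consensus. Your Case 2, however, contains genuine gaps. First, you claim that a single-bit neighbour of $\Pf^{t^*}[i]$ on a shortest path towards the eventual consensus $\Js$ is ``a strictly distance-reducing move for $i$, hence a valid element of \moves''. That is not the membership criterion: a judgment enters \moves only if it is rational, lies in $\CH(\Pf)$, is adjacent to the agent's current judgment, and strictly decreases $D(i,\cdot,\Pf)$, the \emph{summed distance to the other agents' current judgments} (and is among the minimisers). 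A step towards the future consensus $\Js$ --- which need not even belong to the current profile --- can easily increase that sum, and in $\gagh$ the intermediate vertex may not be rational at all. Proposition~\ref{prop:mxd} only gives you $\Js\in\CH(\Pf^{t^*})$, which does not yield an admissible move towards $\Js$.

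Second, even granting such a move, your construction does not deliver what you need, namely $\Js\in\{\Pf^{t^*}\}$. One deviating step puts $\Js$ into the profile only if some agent is already at distance $1$ from $\Js$; in general $\Js$ may be an interior vertex of the hull that no agent visits during the looping phase. Moreover, once you reassign the non-deterministic choices in the window $[t',t^*]$, the resulting trajectory is a \emph{different} run: the repetition condition may fire earlier or on a different repeated set not containing $\Js$, and nothing you cite guarantees that the altered run ``still reaches $\Js$'' --- Lemma~\ref{lem:prob} and the convergence theorem are probabilistic statements about reaching \emph{some} consensus, not about preserving a prescribed target under modified tie-breaking. So the crucial step of your plan (forcing $\Js$ into the profile at the moment \algtwo halts, while keeping convergence to $\Js$) is asserted rather than proved; this is exactly the case the theorem is really about, since the paper's intended argument there is not to force $\Js$ into the halted profile but to relate \algtwo's pick from the repeated profile set to the consensuses \alg can reach after straying from the loop.
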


\begin{theorem}
Consider $\gagh$ and $N$ such that $|N|$ is odd and larger than 3. Moreover, let $\Pf^0 \in \Dmc^n$ be 1-connected and not equidistant.
\algtwo reaches consensus from $\Pf^0$ on $\gagh$ in deterministic polynomial time.
\end{theorem}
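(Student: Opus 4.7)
The plan is to split the claim into termination-with-consensus and polynomial running time, and to bound the total work as (number of iterations) times (cost per iteration).

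For termination with consensus, the hypotheses ($|N|$ odd and $>3$, $\Pf^0$ is $1$-connected and not equidistant) are exactly those of the preceding theorem on reaching consensus, so \alg already reaches consensus with probability $1$. I would argue that \algtwo reaches consensus deterministically by a two-case distinction. Either \alg's iteration naturally produces a unanimous profile, in which case we are done; or the stopping condition of \algtwo fires at the first step $t$ with $\{\Pf_t\}=\{\Pf_{t'}\}$ for some $t'<t$, after which \algtwo performs a single additional non-deterministic choice of some $\Js \in \{\Pf_t\}$ and produces $\{\Pf_{t+1}\}=\{\Js\}$. Either way, the output is a consensus.

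For the per-iteration cost on $\gagh$, I would invoke the immediately preceding proposition. Constructing $\textrm{\moves}$ for agent $i$ requires enumerating at most $|\A|$ Hamming-neighbours of $\Pf[i]$ and checking, for each candidate $\Js$, (a) rationality under $\Ct$, (b) membership in $\CH(\Pf)$ (on $\gagh$ this reduces to verifying that $\Js$ lies on a shortest Hamming path between two profile judgments, which is done by comparing sums of Hamming distances), and (c) the strict-reduction inequality $D(i,\Js,\Pf)<D(i,\Pf[i],\Pf)$. Each check is polynomial in $|\A|$, $|\Ct|$ and $n$, and the whole iteration just repeats this for the $n$ agents.

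The main obstacle, and the most delicate step, is bounding the iteration count polynomially. I would combine two monotonicity facts: by Proposition \ref{prop:mxd} the convex hulls form a non-increasing chain $\CH(\Pf_0)\supseteq\CH(\Pf_1)\supseteq\cdots$, so the diameters $mx_d(\CH(\Pf_t))$ are non-increasing and bounded above by $|\A|$; and by Corollary \ref{lem:mcyc}, whenever $\{\Pf_t\}=\{\Pf_{t-1}\}$ the stopping condition of \algtwo triggers immediately. The crucial intermediate claim is that, as long as \algtwo keeps iterating, $mx_d(\CH(\Pf_t))$ must strictly decrease. This would be proved by adapting the peripheral-vertex argument from Case~a of the proof of Lemma~\ref{lem:mov} and the tree-case commentary following Corollary~\ref{lem:mcyc}: the not-equidistant hypothesis is used exactly here to rule out the equidistant-cycle stall in which no peripheral judgment can move inward without increasing its distance to some other agent. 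With strict decrease, the iteration count is at most $|\A|+1$, and multiplying by the polynomial per-iteration cost gives the desired deterministic polynomial time bound.
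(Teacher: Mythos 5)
Your decomposition is the same skeleton the paper relies on when it calls the theorem a ``straightforward consequence'' of the preceding analysis: polynomial cost per iteration (the preceding proposition), a polynomial bound on the number of iterations, and consensus-on-termination built into \algtwo's stopping rule. The genuine gap is your ``crucial intermediate claim'' that $mx_d(\CH(\Pf_t))$ strictly decreases at every non-terminating step of \algtwo, which is what yields your $|\A|+1$ bound. This claim is not proved in your proposal, and it fails in general as soon as the $\CH(\Pf^0)$-induced subgraph of $\gagh$ contains a cycle: two agents holding antipodal judgments on a hull cycle can each move along a \emph{different} one of the two shortest paths joining them (exactly the situation analysed in Lemma~\ref{lem:prob}), after which they are still at the same maximal distance, so the diameter does not drop, while the new set $\{\Pf_{t+1}\}$ can differ from all earlier sets, so the stopping condition of \algtwo does not fire either. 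Concretely, take the preference agenda for three alternatives (the rational judgments form a $6$-cycle in $\gagh$, and all such profiles are $1$-connected) and five agents at positions $0,0,1,3,4$ along the cycle: this profile is not equidistant, the agent at $3$ has the two admissible moves $2$ and $4$, and if it picks $2$ while the agents at $1$ and $4$ move to $0$ and $5$, the resulting set $\{0,2,5\}$ is new and its diameter is still $3$. The not-equidistant hypothesis does not rescue the claim: it only excludes the three-agent equidistant stall (moot here since $n>3$); it neither forbids cycles in the hull nor diameter stalls of the above kind. The paper's own (admittedly sketchy) justification is structured differently and avoids this overclaim: strict shrinking of the diameter is asserted only when the hull-induced subgraph is acyclic (Case~a of Lemma~\ref{lem:mov} and the remark after Corollary~\ref{lem:mcyc}), and for hulls with cycles the whole point of \algtwo is that repeat-detection cuts the looping short. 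By merging both cases into a single monotonicity claim you lose precisely the case \algtwo was designed for; a repair must either restrict attention to acyclic hulls or add an argument that, whenever the diameter stalls, some earlier set $\{\Pf_{t'}\}$ recurs within polynomially many steps --- the step the paper itself leaves implicit.

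Two smaller points. First, the convergence theorem for \alg assumes $\Pf^0$ is in Class A or Class B (hull-induced subgraph a tree, or with all vertices of degree at most $2$), not merely $1$-connected and not equidistant, so you cannot invoke it verbatim; this is harmless for the consensus part, since, as your own case split notes, \algtwo outputs a consensus by construction whenever it stops. Second, your membership test for $\CH(\Pf)$ on $\gagh$ (``lies on a shortest path between two profile judgments'') characterises the union of pairwise intervals, which under the paper's formal definition of the hull as interval-closure need not exhaust $\CH(\Pf)$; this does not threaten the polynomial per-iteration bound (which the paper's proposition already gives by checking at most $|\A|$ Hamming neighbours), but as stated it is not the definition and may exclude admissible moves.
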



Lastly, let us observe that checking whether $\Pf^0$ is equidistant can be done in linear time of the number of agents.
For a graph $G$, determining if  it has a simple cycle of size $k$, for $k$ fixed,  is a  polynomial time problem over the size of $G$, see \cite{Alon:1997}, however we do not generate the full $\gag$ when (or before) we run the iteration algorithm.


\section{Related Work}\label{sec:rwd}

List \cite{List2011} considered judgment transformation functions $\tau: \Dmc^n \rightarrow (\Dm)^n$ as means to building iteration procedures for judgment aggregation problems. He showed that for a set of desirable properties no transformation functions exists. Such impossibility results exist for judgment aggregation functions, however, by relaxing some of the properties, specific judgment aggregation operators have been constructed: quota-based rules \cite{DietrichList07}, distance-based rules \cite{Pigozzi2006,MillerOsherson08,EndrissGP12,ConalPiggins12}, generalisations of Condorcet-consistent voting rules \cite{PuppeNehring2011,NehringPivato2011,TARK11},  and  rules  based on the maximisation of some scoring function \cite{TARK11,Dietrich:2013,Zwicker11}.  To the best of our knowledge,  specific iteration procedures for judgment aggregation problems have not been proposed in the literature.

List \cite{List2011} argues that the desirable conditions for judgment  transformation functions  should satisfy the following properties:  universal domain, rational co-domain, consensus preservation,  minimal relevance, and independence. Universal domain is satisfied when the  transformation function accepts as admissible input any possible profile of rational judgments. Rational co-domain is satisfied when the function always outputs a profile of rational judgments. Consensus preservation is satisfied when $\tau$ always maps unanimous profiles into themselves. Minimal relevance is a weak property. It is satisfied when for each  $\Pf[i]$ there exists a profile $\Pf'$ to which $\Pf$ can be transformed such that  $\Pf[i] = \Pf'[i]$. In other words, the transformation function should be such that it does not allow one agent to never change her judgment regardless of what the other profile judgments are. Lastly  independence is satisfied when for each agenda issue,  $\Js'_i(\ai)$  depends only on $\Js(\ai)$, and not on $\Js(\phi)$ for some other $\phi \in \A$;  $\Js'_i = \Pf'[i]$, $\Js_i=\Pf[i]$, $\Pf' = \tau(\Pf)$.

Each step of \alg can be seen as a (distributed) function that transforms an input profile into an output profile, namely as a List judgment transformation function.  Given a profile $\Pf \in \Dmc$, let $T_d(\Pf) = \{ \Pf' \mid \Pf'  \in \Dmc^n, d(\Pf[i], \Pf'[i]) \leq 1, i\in[1,n]\}$. We can define the transformation function $\tau_d$ that maps a profile $\Pf \in \Dmc^n$ to a profile $\Pf' \in T_d(\Pf)$. Although  \alg does not terminate  for each profile,  $\tau_d$ does satisfy universal domain in the case of $\gag^c$ and $\gag^m$, because each step of the algorithm transforms the profile (possibly into itself). Universal domain is not satisfied on $\gagh$ because profiles on this graph do not always satisfy the necessary conditions for termination with a consensus. The rational co-domain and the consensus preservation properties are also trivially satisfied. It is not difficult to show that the minimal relevance property is also satisfied.  Independence is the desirable property that is violated, and in fact List \cite{List2011} argues that relaxing independence is the most plausible path towards avoiding the impossibility result.

In voting, deliberation and iterative consensus have been studied, although perhaps not axiomatically. As most similar with our work we distinguish \cite{Hassanzadeh2013} and \cite{Goel2012}.   Voting problems can be represented as judgment aggregation problems, see \eg \cite{DietrichList07,ADT2013}, therefore it is possible to compare these works with ours. First we show how voting problems are represented in judgment aggregation.

A voting problem is specified with a set of agents $N$ and a set of candidate options $O=\{x_1, x_2, \ldots, x_m\}$. Let $\mathcal{O}$ be the set of all total, transitive, and antisymmetric orders over the elements of $O$. A vote $\succ$ is an element of $\mathcal{O}$ and a voting profile is a collection of votes, one for each agent in $N$.   The preference agenda $\A_o$ is constructed by representing each pair of options $x_i$ and $x_j$, where $i < j$ with an issue $x_iPx_j$. The constraint $\Ct_{\textrm{tr}}$ is the transitivity constraint defined as \linebreak $\Ct_{\textrm{tr}}=\underset{x_iPx_j, x_jPx_k, x_iPx_k \in \A_o}{\bigwedge} \big((x_iPx_j) \wedge (x_jPx_k) \rightarrow (x_iPx_k)\big)$. For each vote $\ai \in \mathcal{O}$ we obtain a rational judgment $\Js_{\succ}$ such that $\Js_{\succ}(x_iPx_j) = 1$ iff $x_i \succ x_j$ and  $\Js_{\succ}(x_iPx_j) = 0$ iff $x_j \succ x_i$.

\marija{A Condorcet winner for a voting profile, when it exists,  is the option that wins the majority of  pairwise comparison for every other option in $O$, see \eg \cite{Nurmi10}.  The corresponding concept in the  judgment aggregation representation of a voting problem is called {\em majority consistency}. A judgment profile is majority-consistent if the judgment obtained by taking  the value for each issue assigned by a strict majority of agents in the profile is rational. The doctrinal paradox profile from  Example~\ref{ex:doctrinal} is not majority-consistent.  It was shown \cite{ADT2013,NehringPP14} that if a judgment profile on the preference agenda is majority-consistent, then the corresponding voting profile has a Condorcet winner. }

Hassanzadeh {\em et al.} \cite{Hassanzadeh2013} consider an iterative consensus algorithm for voting profiles. In their algorithm, each agent is allowed to (simultaneously with other agents) move from vote $\succ_i$ to vote $\succ$ if she can flip the order of two adjacent options without violating transitivity.  This corresponds to the agents moving to an adjacent judgment in the agenda graph $G^h_{\A_o, \Ct_{\textrm{tr}}}$.  Hassanzadeh {\em et al} consider the majority graph for a voting profile (for an odd number of agents): the vertices in this directed graph are the elements of $O$ and there is an edge from $x_i$ to $x_j$ if  there are more agents in the profile who prefer $x_i$ to $x_j$ than agents who prefer $x_j$ to $x_i$. The majority graph corresponds to a judgment $\Js\in \mathcal{J}_{\A_o, \Ct_{\textrm{tr}}}$ for which  $\Js(x_iPx_j) = v$, $v \in \{0,1\}$ if there is a strict majority of   agents $r \in N$  for which  $\Js_r(x_iPx_j) = v$, $\Js_r = \Pf[r]$. Hassanzadeh {\em et al}  show that their algorithm terminates with a consensus on the Condorcet winner when the majoritarian graph has no cycles. \marija{If the majority graph of a voting profile has no cycles, then the voting profile has a Condorcet winner.}

Goel and Lee  \cite{Goel2012} consider an iteration procedure in which the agents ``move" along adjacent vertices along (what corresponds to) the graph $\gag^m$. They do not commit to the nature of their vertices, so they are not exactly judgments or alternatives, just allowed options for iteration. In their algorithm not all agents  move individually, but three agents at a time first reach a consensus and then all three move to the consensus option in the graph. Goel and Lee  consider line graphs, graphs in which two vertices have degree 1 and all other vertices have degree 2, and show that the consensus produced by their algorithm is  the generalised median. Namely, if the options in their algorithms were judgments from $\Dmc$  the consensus their algorithm reaches for these graphs is an approximation of  $F^{d_m,\Sigma}$.

Both \cite{Hassanzadeh2013} and  \cite{Goel2012}  offer interesting directions for future study in context of our algorithm: to consider the profiles that have a Condorcet winner (see \eg \cite{ADT2013} for the concept of Condorcet winner in judgment aggregation) and to consider triadic iteration, allowing three agents to coordinate their moves with respect to each other and then see when a consensus emerges.

It is an open question of how our algorithm would perform on the special case of voting problems represented in judgment aggregation. The \marija{ $G^h_{\A_o, \Ct_{\textrm{tr}}}$ graph  on the preference agenda has a more regular topology in comparison to general judgment aggregation problems, it is a permutahedron.} \marija{ For example, for an agenda of three options, the  graph $G^h_{\A_o, \Ct_{\textrm{tr}}}$  is a cycle of length 6.} For every $\Js, \Js' \in \mathcal{J}_{\A_o, \Ct_{\textrm{tr}}}$, $d_h (\Js, \Js') = d_m (\Js, \Js')$, thus the necessary conditions for reaching consensus for \alg would be satisfied even on $\gagh$ because  every profile on the preference agenda and judgments rational for the transitivity constraint  is $1$-connected in  $G^h_{\A_o, \Ct_{\textrm{tr}}}$. The graph  $G^m_{\A_o, \Ct_{\textrm{tr}}}$ always  has $\frac{|O|\cdot (|O|-1)}{2}$ vertices and each of these vertices  has a degree $|O|-1$.  We leave for future work the study of  whether our algorithm terminates for voting profiles. In particular, we conjecture that the \alg for an odd number of agents will converge on the Condorcet winner in the case of voting profiles from the single crossing domain \cite{Bredereck2012}, also studied in judgment aggregation \cite{domains}. This is because  profiles in this domain would have a hull whose induced graph is a line on $G^h_{\A_o, \Ct_{\textrm{tr}}}$.

\marija{Lastly, we must mention \cite{Obraztsova:2013}.  Obraztsova {\em et al} \cite{Obraztsova:2013} consider a graph similar to our Hamming agenda graph. They work with preferences, not judgments, but most importantly,   the vertices of their graph are elements of (what would correspond to)   $\Dmc^n$, \ie the vertices are profiles of votes. There exists a connection between two profiles if one profile can be obtained from the other by making exactly one swap between adjacent options in one vote.  Obraztsova {\em et al}   \cite{Obraztsova:2013} study the properties of voting rules with respect to the ``geometry" of the profiles in their graph. }

\section{Conclusions}\label{sec:sum}

In this paper, we propose a decentralised  algorithm for iterative consensus building in judgment aggregation problems. 
We study the termination conditions for this algorithm, some of its structural properties, and its computational complexity.

In order to reach a consensus, our algorithm exploits the topology of a graph. All the available judgments that the agents can chose from are vertices in the graph. The algorithm models an agent's change of mind as a move between adjacent judgments in the graph.  We define three natural graphs that can be constructed for a set of rational judgments $\Dmc$: the complete graph $\gag^c$, the Hamming graph $\gagh$, and the model agenda graph $\gag^m$. We prove that our algorithm always terminates for an odd number of agents on the graph $\gag^c$, but it necessarily selects one of the judgments proposed in the first round of iterations. For the graphs  $\gagh$  and $\gag^m$ we show a class of profiles for which the algorithm terminates with a consensus and a class of profiles for which it does not terminate with a consensus.

If the agents initially chose judgments such that the convex hull of the profile of these judgments  induces a subgraph of $\gagh$,  or  $\gag^m$ in which each vertex has a degree of at most 2, then our algorithm probabilistically terminates with a consensus for an odd number of (more than 3) agents.

The list of  profiles we give here,  for which  \alg terminates with a consensus,  is clearly not exhaustive. For example, it is easy to show that, for an odd number of agents,  \alg terminates with a consensus if the $\CH(\Pf^0)$ induced subgraph  of $\gagh$,  or  $\gag^m$, is such that it contains only $k$-cycles, where $k= 2\cdot mx_d(\Pf^0) +1$. This is because for such profiles there exists at least one pair of antipodal judgments with degree no more than 2 who will have a nonempty set \moves. 
An immediate direction for future work is to strengthen our results with other classes of consensus terminating agenda graph topologies, particularly those corresponding to profiles on the preference agenda (and transitivity constraints).

A step of our algorithm implements  a judgment profile transformation function of the type defined in \cite{List2011}. List \cite{List2011} gives an impossibility characterisation of such functions. Our function ``escapes" this impossibility result by not satisfying  the independence property  on all agenda graphs and the universal domain on $\gagh$.

While  $\gag^c$,   and $\gag^m$ satisfy the necessary conditions for termination of \alg for any $\A$ and $\Ct$, this is not the case with $\gagh$, which is why the transformation function fails to satisfy universal domain on $\gagh$. On  $\gagh$, sometimes all the adjacent judgments to a rational judgment $\Js$ are not rational and thus not allowed to move to. In our future work we aim to explore modifications of the algorithm allowing the agents to make ``longer" moves, \ie to ``jump over" a vertex that is not a rational judgment.

In Section~\ref{sec:properties} we gave two results with respect to the quality of the  consensus reached by \alg with respect to the widely used distance-based aggregation function $F^{d_h, \Sigma}$.  This function $F^{d_h, \Sigma}$ is also known as the median aggregation rule and it is widely used in many domains, \eg  generalises the Kemeny voting rule, see \cite{ADT2013},  and for measuring dissimilarity between concepts in ontologies \cite{DistelAB14}. We merely scratched the surface of this consensus quality analysis and this line of research merits further attention.

Lastly, a more long-term goal for our future work is to explore versions of iteration on an agenda graph where the agents do not try to move to reduce the path distance to all of the other agents, but only to their neighbours in a given social network, or as in \cite{Goel2012}, to two randomly selected two agents.


\smallskip
\noindent\textbf{Acknowledgements.} We are grateful to Edith Elkind and the anonymous reviewers for their help in improving this work. Wojciech Jamroga acknowledges the support of the 7th Framework Programme of the EU under the Marie Curie IEF project ReVINK (PIEF-GA-2012-626398). We also acknowledge the support of ICT COST Action IC1205.
\bibliography{ecai}

\end{document}